\pdfoutput=1

\documentclass[11pt]{article}

\usepackage[final]{acl}

\usepackage{times}
\usepackage{latexsym}

\usepackage[T1]{fontenc}

\usepackage[utf8]{inputenc}

\usepackage{microtype}

\usepackage{inconsolata}

\usepackage{graphicx}
\usepackage{amsmath}
\usepackage{mathtools}
\usepackage{bbm}
\usepackage{amssymb}
\usepackage{algorithm}          
\usepackage{overpic}
\usepackage{algpseudocode}
\usepackage{subcaption}
\usepackage{multirow}
\usepackage{subcaption}
\usepackage{booktabs}
\usepackage{tikz}
\usepackage{tabularx}
\usepackage{bibunits}

\newtheorem{theorem}{Theorem}

\title{
\framework: Few-step Match Many-step Diffusion Language Model on Sequence-to-Sequence Generation--Full Version}

\author{
 \textbf{Dat Nguyen-Cong\textsuperscript{1}},
 \textbf{Tung Kieu\textsuperscript{2}},
 \textbf{Hoang Thanh-Tung\textsuperscript{3}}
\\
\\
\textsuperscript{1}FPT Software AI Center, FPT Corporation,
\\
\textsuperscript{2}Department of Computer Science, Aalborg University, Denmark, 
\\
\textsuperscript{3}Quantum AI and Cyber Security Institute, FPT Corporation
\\
 \small{
   \href{mailto:dat27072002@gmail.com}{dat27072002@gmail.com},
   \href{mailto:tungkvt@cs.aau.dk}{tungkvt@cs.aau.dk},
   \href{mailto:htt210@gmail.com}{htt210@gmail.com}
 }
}

\usepackage[colorinlistoftodos,prependcaption,textsize=tiny]{todonotes}

\usepackage{xspace}

\newcommand{\framework}{\texttt{FastDiSS}\xspace}
\newcommand{\scheduler}{\texttt{SCP}\xspace}
\newcommand{\noisescaling}{\texttt{MANS}\xspace}

\begin{document}

\maketitle
\setcounter{page}{1}

\begin{abstract}
Self-conditioning has been central to the success of continuous diffusion language models, as it allows models to correct previous errors. Yet its ability degrades precisely in the regime where diffusion is most attractive for deployment: few-step sampling for fast inference. 
In this study, we show that when models only have a few denoising steps, inaccurate self-conditioning induces a substantial approximation gap; this mistake compounds across denoising steps and ultimately dominate the sample quality.
To address this, we propose a novel training framework that handles these errors during learning by perturbing the self-conditioning signal to match inference noise, improving robustness to prior estimation errors. In addition, we introduce a token-level noise-awareness mechanism that prevents training from saturation, hence improving optimization. Extensive experiments across conditional generation benchmarks demonstrate that our framework surpasses standard continuous diffusion models while providing up to 400x faster inference speed, and remains competitive against other one-step diffusion frameworks.
\end{abstract}
\section{Introduction}
\label{sec:introduction}
Diffusion models have recently emerged as a compelling alternative to autoregressive text generation, matching and in some settings surpassing autoregressive models in quality and controllability~\cite{DBLP:journals/corr/abs-2502-09992,DBLP:journals/corr/abs-2507-15857}. 
A key advantage of diffusion models lies in their ability to generate all tokens in parallel, offering a linear-time decoding rather than a quadratic complexity as in autoregressive decoding ~\cite{radford2018improving,radford2019language,DBLP:conf/nips/BrownMRSKDNSSAA20}. This property makes diffusion models attractive for a broad range of natural language processing tasks beyond unconditional generation, such as conditional and structured sequence modeling~\cite{DBLP:conf/iclr/GongLF0K23,DBLP:conf/nips/YeGCZGSWJLBK24,DBLP:journals/corr/abs-2508-15487}.

Despite these advantages, diffusion language models face a central efficiency bottleneck: high-quality generation typically requires a long reverse process with many denoising steps~\cite{DBLP:conf/iclr/GongLF0K23}. The resulting iterative sampling cost eventually offset the latency gains from parallel token generation. To mitigate this issue, a widely adopted trick is to use self-conditioning~\cite{DBLP:conf/iclr/ChenZH23,DBLP:conf/nips/GulrajaniH23}, which reuses previous predictions as an additional conditioning signal to improve prediction under fewer steps. 
While self-conditioning indeed strengthens few-step sampling, we find that it introduces underappreciated failure modes that become critical precisely in the fast-inference application.

\paragraph{Mismatching between training-inference self-condition under few-step denoising.}
Self-conditioning introduces an intrinsic mismatch between training and inference~\cite{DBLP:conf/emnlp/Schmidt19,DBLP:conf/iclr/NingLSSE24,DBLP:conf/naacl/GaoG0ZZ0X24}. 
During training, a model can be conditioned on ground-truth targets, whereas at inference, it must be conditioned on its own imperfect previous predictions. This distribution shift induces error accumulation along the reverse trajectory, leading to sampling drift~\cite{DBLP:conf/nips/DarasDDD23} and degraded generation quality. 
Our analysis shows that the problem is amplified in few-step settings: predictions made at high noise levels differ significantly from those made later at low noise, turning the reused self-conditioning signal into a biased condition. Consequently, self-conditioning can become unstable, and in the worst case, the reused estimates can steer subsequent denoising updates in the wrong direction.

\paragraph{Loss saturation in the late-stage training.} 
Diffusion language models often fit the denoising objective quickly early in training, but subsequently exhibit a pronounced loss plateau. 
This slow improvement suggests that the sampled noise levels become insufficiently informative: the training signal is dominated by ``easy'' cases where tokens are already predicted with high confidence, leading to inefficient learning. 
Prior works attribute this behavior in part to applying a uniform noise schedule across tokens, which ignores token-wise heterogeneity in denoising difficulty~\cite{DBLP:conf/naacl/YuanYTHH24}.
Consistent with this view, our analysis shows that uniform noise sampling is suboptimal. In particular, increasing noise for well-predicted tokens yields a more effective learning signal and improves optimization efficiency, thus achieving lower evaluation loss than models trained with the uniform schedule.

To address these challenges, we propose \textbf{Fast Diffusion Sequence-to-Sequence} (\framework), a novel training framework designed to improve the robustness and efficiency of the diffusion model in the few-step setting. Building on self-conditioning, \framework introduces two complementary components that directly target the above failure modes. 
First, we propose \textbf{Self-conditioning Perturbation} (\scheduler), a simple regularization strategy for self-conditioning.
Initially, during training, we obtain this condition by running the denoising network on a more-noised forward process, producing a weaker and noisier estimation.
We then train the network conditioned on this corrupted signal, better matching inference-time errors and reducing sampling drift.
Second, we introduce \textbf{Model-aware Noise Scaling} (\noisescaling), a token-level noise allocation strategy that dynamically adjusts noise based on denoising confidence.
\noisescaling applies higher noise to high-confidence tokens, prevents trivial training, and further reduces self-conditioning errors at high noise levels.

We evaluate \framework on six benchmarks covering diverse sequence-to-sequence tasks and few-step generation settings. Across settings, \framework consistently narrows the gap between few-step and many-step sampling, outperforming prior text diffusion baselines in both quality and efficiency. In particular, \framework improves \textit{BLEU} score~\cite{DBLP:conf/acl/PapineniRWZ02} while achieving substantial speedups ranging from 4$\times$ to 400$\times$, and remains competitive with other few-step diffusion approaches.

In summary, our contributions are threefold:
(1) we identify and analyze two bottlenecks that limit self-conditioned diffusion language models under few-step samplings, highlighting the roles of discretization-induced mismatch and late-stage training saturation; (2) we introduce \framework, combining \scheduler to regularize self-conditioning under realistic inference noise and \noisescaling to avoid trivial denoising via confidence-driven token-wise noise allocation; and (3) we demonstrate consistent gains on six benchmarks, showing that \framework improves generation quality while substantially accelerating inference.

\section{Background}
\label{sec:background}
\subsection{Denoising Diffusion Probabilistic Models} 
We revisit Gaussian diffusion process in its continuous-time formulation~\cite{DBLP:conf/iclr/0011SKKEP21,DBLP:conf/nips/ChuangHLGLCL24}, which defines a trajectory $\{\boldsymbol{z}_{t}\}_{t=0}^{1}, t \in \mathbb{R}$ of increasing noise starting from the clean data $\boldsymbol{z}_{0}\sim p(\boldsymbol{z}_{0})$ and ending with $\boldsymbol{z}_{1}\sim \mathcal{N}(0,\mathbf{I})$.
For any $t$, the noise schedule comprises the decay factor $\alpha_{t}$ and the diffusion rate $\sigma_{t}$, which are strictly positive and monotonic over time.

Given that $q(\boldsymbol{z}_{t}|\boldsymbol{z}_{0})$ satisfies the Markovian property, the forward process is formulated as follows.
\begin{align}
    q(\boldsymbol{z}_{t}|\boldsymbol{z}_{0})&=\mathcal{N}(\boldsymbol{z}_{t}; \alpha_{t} \boldsymbol{z}_{0},\sigma_{t}^{2}\mathbf{I}), \label{eq:forward_1}\\ q(\boldsymbol{z}_{t}|\boldsymbol{z}_{s})&=\mathcal{N}(\boldsymbol{z}_{t}; (\alpha_{t}/\alpha_{s})\boldsymbol{z}_{s},\sigma_{t|s}^2\mathbf{I})\label{eq:forward_2}
\end{align}
\noindent Here, $0 \leq s < t \leq 1$ and $\sigma_{t|s}^2=\sigma_{t}^{2}-(\alpha_{t}^{2}/\alpha_{s}^{2})\sigma_{s}^{2}$.
As $t \to 1$, $\alpha_{t} \to 0$ and $\sigma_{t} \to 1$, the endpoint follows a Gaussian distribution.

The goal of the diffusion model is to denoise $\boldsymbol{z}_0\sim p(\boldsymbol{z}_0|\boldsymbol{z}_t)$ through a neural network $D_\theta(\boldsymbol{z}_t)$, which is trained using a mean-squared error loss:
\begin{equation}
    \mathcal{L}_{\text{diffusion}} = \mathbb{E}_{\boldsymbol{z}_{0},t \sim \mathcal{U}[0, 1]} [||D_\theta(\boldsymbol{z}_t)-\boldsymbol{z}_{0}||^2] 
\end{equation}
\noindent Here, $\mathcal{U}[0, 1]$ denotes the continuous uniform distribution. 
Recursive sampling from the distribution 
\begin{align}
    p(\boldsymbol{z}_s|\boldsymbol{z}_t) &=                          \mathbb{E}_{p(\boldsymbol{z}_0|\boldsymbol{z}_t)}[q(\boldsymbol{z}_{s}|\boldsymbol{z}_t,\boldsymbol{z}_0)] \nonumber\\ &\approx\mathbb{E}_{p(\hat{\boldsymbol{z}}_\theta^t|\boldsymbol{z}_t)}[q(\boldsymbol{z}_{s}|\boldsymbol{z}_t,\hat{\boldsymbol{z}}_\theta^t)],
    \label{eq:posterior}
\end{align}
where $\hat{\boldsymbol{z}}_\theta^t=D_\theta(\boldsymbol{z}_t)$, starting at $\boldsymbol{z}_1\sim \mathcal{N}(0,\mathbf{I})$, enables generating data from $p(\boldsymbol{z}_{0})$. 
Full expression of $q(\boldsymbol{z}_{s}|\boldsymbol{z}_t,\boldsymbol{z}_0)$ is demonstrated in Appx.~\ref{subsec:posterior_dist_derv}.

\begin{figure*}[t!]
    \centering
    \includegraphics[width=0.9\linewidth]{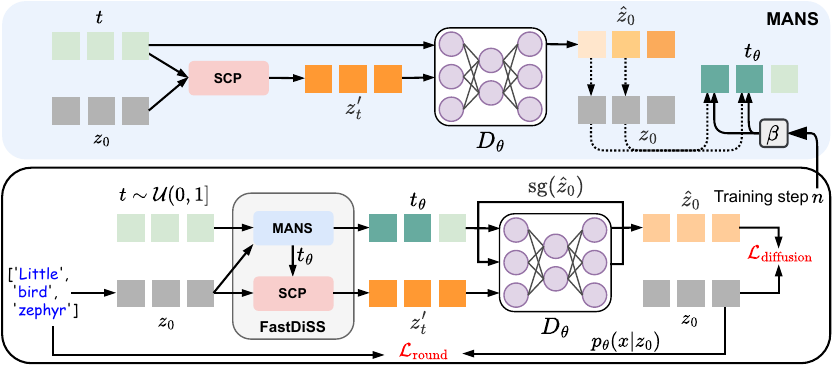}
    \caption{Overview of \framework. The tokenized sequence is first encoded to $\boldsymbol{z}_0$, while concurrently sampling the initial timestep $t$. Both $\boldsymbol{z}_0$ and $t$ are passed into \noisescaling to obtain the new timestep $t_\theta$. Subsequently, noise level at $t_\theta$ is added to $\boldsymbol{z}_0$ using \scheduler to obtain $\boldsymbol{z}_t'$. The rest is the same as in the training objective in Eq.~\ref{eq:loss}.}
    \label{fig:pipeline}
    \vspace{-2mm}
\end{figure*}
\subsection{Conditional Sequence Modeling With Diffusion Models}
Diffusion models rely on a continuous space where Gaussian noise can be smoothly added and subtracted. 
However, texts are composed of discrete tokens with no inherent notion of “small changes” between them. 
To address this, \texttt{DiffusionLM}~\cite{DBLP:conf/nips/LiTGLH22} maps the text sequence $\boldsymbol{x} \in \{0, 1\}^{L\times V}$, where each token is represented as a one-hot vector, into a continuous latent space $\boldsymbol{z}_{0}\in\mathbb{R}^{L\times H}$, with sequence length $L$, hidden dimension $H$, and vocabulary size $V$.
Hence, $\boldsymbol{z}_{0}\sim q(\boldsymbol{z}_{0}|\boldsymbol{x})$ is the embedding codebook of $\boldsymbol{x}$.

The reverse process aims to generate $\boldsymbol{z}_0$, which is then mapped back to $\boldsymbol{x}$.
The diffusion model is trained on the latent space with the objective
\begin{align}
    &\mathcal{L}_{\text{total}} = \mathcal{L}_{\text{diffusion }}+\mathcal{L}_{\text{round}} \nonumber\\
    &= \mathbb{E}_{\boldsymbol{z}_{0},t}[||D_\theta(\boldsymbol{z}_t)-\boldsymbol{z}_{0}||^{2}]+ \mathbb{E}_{\boldsymbol{z}_{0}} \left[ -\log p_\theta(\boldsymbol{x}|\boldsymbol{z}_{0}) \right] ,\label{eq:loss}
\end{align}
where $\mathcal{L}_{\text{round}}$ is the $\boldsymbol{z}_0\to\boldsymbol{x}$ reconstruction loss.

To extend the model for conditional generation, a simple yet effective approach is to incorporate the conditioning sequence $\boldsymbol{c}$ as an additional input to the denoising network, i.e., $D_\theta(\boldsymbol{z}_t, \boldsymbol{c})$. 
The target sequence length $L$ can be inferred from the conditioning context via a learned prior $L\sim p(L|\boldsymbol{c})$.
Aside from this conditioning, the diffusion process remains unchanged as in Eq.~\ref{eq:posterior}.

\subsection{Self-conditioning Diffusion Models}
\label{subsec:self-conditioning-diffusion-models}
During training, the initial forward prediction $\hat{\boldsymbol{z}}_\theta^t$ is fed back into the denoising model as an auxiliary condition.
The $\boldsymbol{z}_0$-prediction then becomes $\boldsymbol{\bar{z}}_\theta^{t}=D_\theta(\boldsymbol{{z}}_t,\mathrm{sg}(\boldsymbol{\hat{z}}_\theta^t))$, where $\mathrm{sg}(\cdot)$ denotes the stop-gradient operation, preventing gradient propagation through $\boldsymbol{\hat{z}}_\theta^t$. 
For clarity, we omit the source condition $\boldsymbol{c}$, as the discussion here focuses solely on the self-conditioning mechanism.
The self-conditioning training objective becomes:
\begin{align}
    \mathcal{L}_{\text{sc}} = \mathbb{E}_{\boldsymbol{z}_0,t}[||D_\theta(\boldsymbol{z}_t,\mathrm{sg}(\boldsymbol{\hat{z}}_\theta^t))-\boldsymbol{z}_0||^2]. 
    \label{eq:sc_obj}
\end{align}
The training process alternates between optimizing $\mathcal{L}_{\text{diffusion}}$ and $\mathcal{L}_{\text{sc}}$.

At each step, estimating $\boldsymbol{\hat{z}}_\theta^t$ followed by $\boldsymbol{\bar{z}}_\theta^{t}$ double the inference time.
Instead, the prediction from the previous step $u$ is reused as the conditioning to avoid additional inference overhead.
We denote this estimation as $\bar{\boldsymbol{z}}_\theta^{tu}$, for any $0 < s < t < u \leq 1$.

\section{Methodology}
In this section, we describe the design of \framework for efficient sequence-to-sequence language generation.
Fig.~\ref{fig:pipeline} provides an overview. Building upon the standard text diffusion architecture, \framework augments training with two tightly coupled components: {Self-conditioning Perturbation} (\scheduler) and {Model-aware Noise Scaling} (\noisescaling). 
Together, they target the two bottlenecks identified in Sec.~\ref{sec:introduction}: (i) mismatch between training-time and inference-time self-conditioning under few-step discretization, and (ii) late-stage training saturation caused by uninformative, token-unaware noise.

\subsection{Self-conditioning Limitations}
\label{subsec:selfcond_limit}
As described in Sec.~\ref{subsec:self-conditioning-diffusion-models}, using the reused estimation $\bar{z}_\theta^{tu}$ in place of the step-matched prediction 
$\bar{z}_\theta^{t}$ introduces a training-inference mismatch.
Intuitively, estimation from a coarser step $u$ carries larger uncertainty than that from step $t$~\cite{DBLP:conf/icml/BaoLSZZ22,DBLP:conf/iclr/NingLSSE24}, so $\bar{\boldsymbol{z}}_\theta^{tu}$ is more likely to deviate from the target embedding $\boldsymbol{z}_0$.
To measure the effect of this estimation gap, we report the \textit{BLEU} score on the generation of \textbf{IWSLT14} De-En dataset, using the correct $\bar{\boldsymbol{z}}_\theta$ and the original $\bar{\boldsymbol{z}}_\theta^{tu}$ self-condition sampling.
In Tab.~\ref{tab:sc_bleu}, we conduct experiments with different numbers of denoising steps (NFEs).

\begin{table}[h!]
\centering
\small
\setlength{\tabcolsep}{4.3pt} 
\begin{tabular}{lccccc}
\toprule
\textbf{Model} & \multicolumn{5}{c}{\textbf{Number of denoising steps (NFEs)}} \\
\cmidrule(lr){2-6}
& 5 & 20 & 50 & 100 & 1{,}000 \\
\midrule
\textbf{Original} & 27.85 & 29.83 & 29.97 & 30.10 & 30.12 \\
\textbf{Correct}      & 29.70 & 30.21 & 30.34 & 30.20 & 30.23 \\
\bottomrule
\end{tabular}
\caption{\textit{BLEU} scores.}
\label{tab:sc_bleu}
\end{table}

The table shows that the original self-conditioning signifies the approximation error when NFE is small, revealing a \textit{training and inference empirical gap}.
In contrast, the performance of the corrected self-condition slightly drops when NFE is reduced from $20$ to $5$, while it remains consistent for the rest NFEs.
This observation highlights the importance of a training design that aligns with the inference process, which not only boosts performance but also enables more efficient inference.
We conduct a more thorough theoretical analysis of the estimation gap in Appx.~\ref{subsec:theorem_proof}.

\subsection{Self-conditioning Perturbation}
\label{subsec:lins}

We propose \scheduler to reduce the training-inference mismatch of self-conditioning. Intuitively, \scheduler simulates inference self-condition behaviors by regularizing the correct self-condition in training, thereby improving stability and reducing drift without changing the sampling procedure.

The perturbed self-conditioning is obtained from the denoising output of a modified forward process, in which a perturbed version $\boldsymbol{z}'_t$ of $\boldsymbol{z}_t$ is introduced:
\begin{equation}
    \boldsymbol{z}'_{t} = \alpha_t \lambda_t \boldsymbol{z}_{0} + \sigma_{t}\sqrt{1+\gamma_{t}^{2}}\, \boldsymbol{\epsilon}_{t}, 
    \label{eq:perturb_forward}
\end{equation}
where $\boldsymbol{\epsilon}_t \sim \mathcal{N}(0,\mathbf{I})$.
The factors $\lambda_t$ and $\gamma_t$ corrupt the signal term and inflate the effective noise level, yielding a more aggressively noised input and thus a less reliable self-conditioning estimate. We parameterize them as linear schedules, $\lambda_t=(\lambda_{\min}-\lambda_{\max})t+\lambda_{\max}$ and $\gamma_t=(\gamma_{\min}-\gamma_{\max})t+\gamma_{\max}$, with monotonic variation over $t$.
The hyperparameter choice is guided by the forward ratios $\alpha_u/\alpha_t$ and $\sigma_u/\sigma_t$, so that $\boldsymbol{z}'_t$ matches the scale of a later state along the trajectory and approximates the conditioning statistics induced by prior estimation.

Empirically, we observe that the reused estimate behaves as an approximately Gaussian perturbation around the current prediction $\hat{\boldsymbol{z}}_\theta^{t}$, supporting the view that \scheduler captures the noise pattern introduced by inference-time self-conditioning. A simplified derivation and detailed distributional analysis are provided in Appx.~\ref{sec:sc_gauss}.

We summarize the training procedure in Appendix (see Alg.~\ref{alg:train}). Conventionally, we randomly apply self-conditioning with probability $50\%$, alternating between conditioned and unconditioned updates to keep the initial prediction meaningful.

\subsection{Model-aware Noise Scaling}
\begin{figure}[t!]
    \centering
    \begin{overpic}[width=0.98\linewidth]{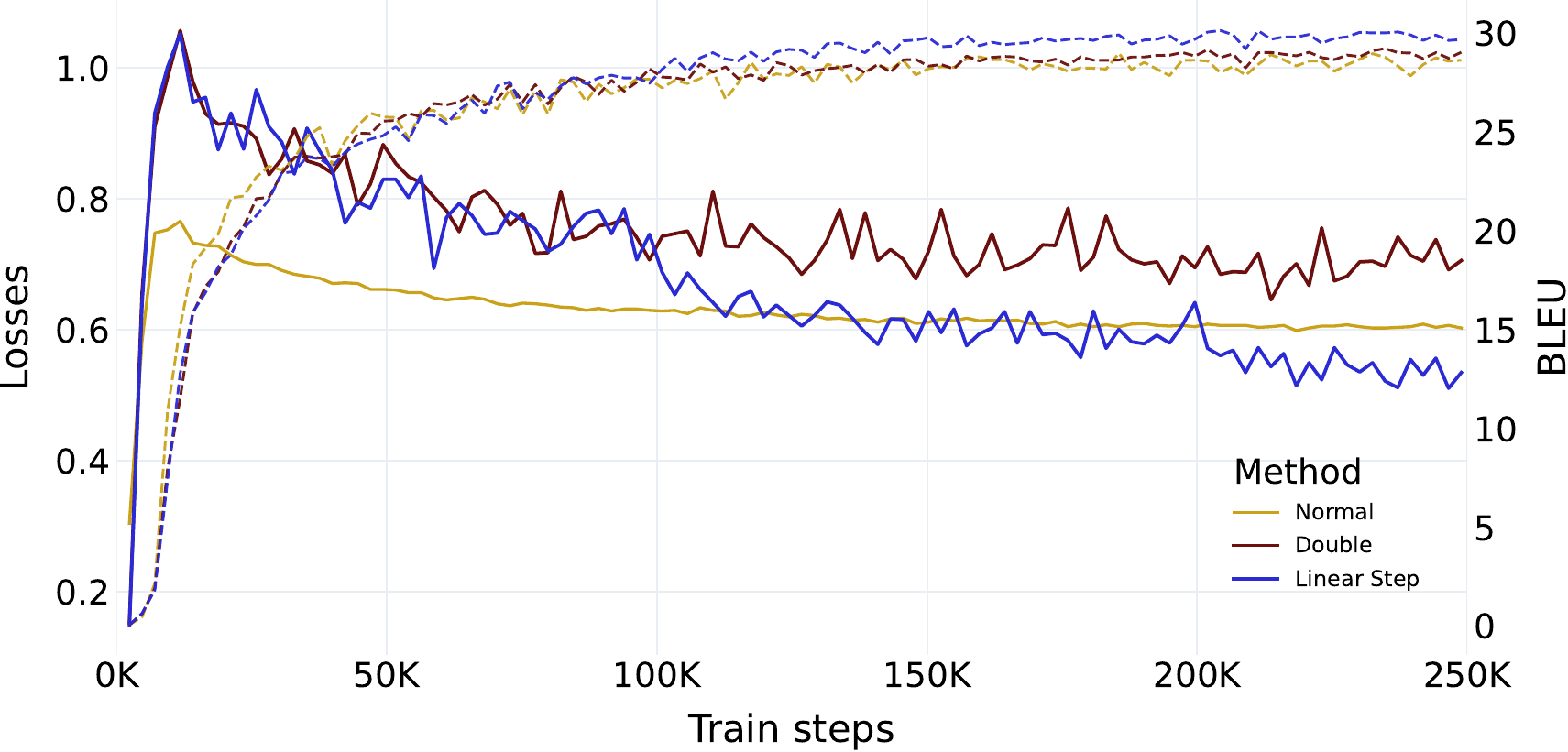}
    \end{overpic}
    \caption{Validation loss and \textit{BLEU} during training under fixed, double, and linear step noise scaling.
    Dashed lines denote \textit{BLEU} scores, color-matched to the corresponding loss curves.
    }
    \label{fig:training_t_scale}
\end{figure}

\begin{table*}[t!]
    \centering
    \small
    \setlength{\tabcolsep}{2.7pt} 
    \begin{tabular}{l | c c | c c c c c c}
        \toprule
        \multirow{2}{*}{\textbf{Method}} & \multirow{2}{*}{\textbf{MBR}} & \multirow{2}{*}{\textbf{NFE}} & \multicolumn{2}{c}{\textbf{IWSLT14}} & \multicolumn{2}{c}{\textbf{WMT14}} & \multicolumn{2}{c}{\textbf{WMT16}} \\
        & & & \textbf{DE}$\to$\textbf{EN} & \textbf{EN}$\to$\textbf{DE} & \textbf{DE}$\to$\textbf{EN} & \textbf{EN}$\to$\textbf{DE} & \textbf{RO}$\to$\textbf{EN} & \textbf{EN}$\to$\textbf{RO} \\
        \midrule
        \texttt{Transformer} & \multicolumn{1}{c}{5} & - & 33.61 & 28.30 & 30.55 & 26.85 & 33.08 & 32.86 \\
        \midrule
        \texttt{CMLM} & \multicolumn{1}{c}{5} & - & 29.41 & 24.34 & 28.71 & 23.22 &	31.13 & 31.26 \\
        \midrule
        \texttt{DiffusionLM}$^{\ddagger}$ & 50 & 20 & 29.11 & 22.91 & 19.69 & 17.41 & 30.17 & 29.39 \\
        \texttt{Diffomer}$^{\ddagger}$ & 20 & 20 & 28.01 & 23.31 & 25.30 &	23.80 &	29.37 & 29.20 \\
        \texttt{DINOISER} & 50 & 20 & {31.61} & 25.70 & {29.05} & {24.26} & 31.22 & 31.08 \\
        \texttt{DiffuSeq} & 10 & 2000 & 29.43 & - & 22.72 & - & - & -  \\
        \texttt{SeqDiffuSeq} & 10 & 2000 & 30.45 & 22.12 & 23.93 & 19.76 & - & - \\
        \texttt{AR-Diffusion} & 10 & 20 & 31.80 & {26.01} & - & - & - & -  \\
        \midrule
        \framework & 10 & 5 & 32.46 & 25.73 & 29.54 & 24.33 & 31.55 &	30.88 \\
        \framework & 20 & 5 & 32.70 & 26.02 & \underline{29.75} & \textbf{24.69} & 31.81 & 30.90 \\
        \framework & 10 & 20 & \underline{32.81} & \underline{26.29} & 29.47 & 24.50 & \underline{31.89} & \underline{31.37}        \\
        \framework & 20 & 20 & \textbf{32.88} & \textbf{26.39} & \textbf{29.83} & \underline{24.57} & \textbf{31.99} & \textbf{31.44} \\
        \bottomrule
    \end{tabular}
    \caption{Main results on \textbf{IWSLT14}, \textbf{WMT14}, and \textbf{WMT16}. The best \texttt{NAR} results are \textbf{bold} and the second-best results are \underline{underlined}. $\ddagger$ indicates results reported by \citet{DBLP:journals/corr/abs-2302-10025}. Other results are from their original papers.}
    \label{tab:iwslt14_bleu}
\end{table*}

We introduce \noisescaling to make the diffusion objective token-aware, mitigating late-stage loss saturation.
The key idea is to adaptively \emph{increase} noise for tokens the model already denoises confidently, while leaving uncertain tokens unchanged.
This strengthens supervision at higher noise levels where few-step discretization is most brittle, and improves self-conditioning even under heavier corruption.

To quantify denoising confidence, we evaluate reconstructed embedding quality $\hat{\boldsymbol{z}}_\theta^t$ by mapping it to the nearest token embeddings $\boldsymbol{e}_m$:
\begin{align}
    i &= \underset{m=1:V}{\arg\min} \ ||\boldsymbol{z}_0 - \boldsymbol{e}_m||_{2}, \\
    j &= \underset{m=1:V}{\arg\min} \ ||\hat{\boldsymbol{z}}_\theta^t - \boldsymbol{e}_m||_{2}.
\end{align}
where $i$ and $j$ denote the ground-truth and reconstructed tokens, respectively. 
We treat $i=j$ as a high-confidence token, indicating that the denoiser can already recover it reliably at noise level $t$.

We then define a \emph{noise scaling schedule} that increases denoising difficulty for these high-confidence tokens. Specifically, at training iteration $n$, we rescale the effective timestep used to corrupt the token by a model-aware factor $\beta(n)$:
\begin{equation}
    t_\theta = \begin{cases}
         \beta(n)\cdot t & \text{if} \quad i = j, \\
         t & \text{otherwise.}
    \end{cases}
\end{equation}
Intuitively, correctly reconstructed tokens are pushed to a higher-noise regime so the model continues to receive informative gradients, while incorrectly reconstructed tokens remain at the original difficulty to avoid destabilizing learning.

While $\beta(\cdot)$ can in principle take many forms, our ablations indicate that \emph{progressively increasing} the noise is crucial to overcoming loss saturation. 
As shown in Fig.~\ref{fig:training_t_scale}, a constant doubling of noise ($\beta=2.0$) yields limited gains and does not consistently break saturation, whereas a \texttt{Linear} schedule continues to improve both validation loss and BLEU even in late training.

Accordingly, we instantiate $\beta(\cdot)$ as a linear stepping schedule, where the scaling factor increases after predefined iteration milestones.
This ensures that once the model has sufficiently optimized the current noise regime, it is presented with a systematically harder denoising target, keeping the training signal non-trivial throughout optimization. Qualitative examples illustrating high and low confidence tokens are provided in Appx.~\ref{sec:visualize}.

\section{Experiments}
\label{sec:exp}
\subsection{Experimental Settings}
\paragraph{Datasets.} 
Following prior works ~\cite{DBLP:conf/iclr/GongLF0K23}, we evaluate Machine Translation on \textbf{IWSLT14} (En–De/De–En)~\cite{DBLP:conf/iwslt/CettoloNSBF14}, \textbf{WMT14} (En-De/De-En), and \textbf{WMT16} (En-Ro/Ro-En)~\cite{DBLP:conf/wmt/BojarBFHKLMPPSS14};  
Summarization on \textbf{Gigaword}~\cite{DBLP:conf/emnlp/NarayanCL18}; Question Paraphrase on \textbf{QQP}; and Text Simplification on \textbf{Wiki-Auto}.

\paragraph{Evaluation Metrics.}
We report \textit{SacreBLEU} for Machine Translation~\cite{DBLP:journals/corr/abs-2302-10025,DBLP:conf/iclr/GongLF0K23}; \textit{ROUGE-1/2/L} for Summarization~\cite{DBLP:conf/emnlp/QiYGLDCZ020}; and for Question Paraphrasing and Text Simplification, we follow the evaluation setup of~\citet{DBLP:conf/iclr/GongLF0K23}, using sentence-level \textit{BLEU}, \textit{ROUGE-L}, and \textit{BERTScore}~\cite{DBLP:conf/iclr/ZhangKWWA20} to assess quality, along with sentence-level self-\textit{BLEU}~\cite{DBLP:conf/sigir/ZhuLZGZWY18} to measure diversity.

\begin{table*}[t!]
    \setlength{\tabcolsep}{3pt} 
    \centering
    \small
    \begin{tabular}{l | c  c | c c c | c c c}
        \toprule
        \multirow{2}{*}{\textbf{Method}} & \multirow{2}{*}{\textbf{MBR}} & \multirow{2}{*}{\textbf{NFE}} & \multicolumn{3}{c|}{\textbf{QQP}} & \multicolumn{3}{c}{\textbf{Wiki-Auto}} \\
        & & & \textbf{\textit{BLEU}}$\uparrow$ & \textbf{\textit{Rouge-L}}$\uparrow$ & \textbf{\textit{BertScore}}$\uparrow$ & \textbf{\textit{BLEU}}$\uparrow$ & \textbf{\textit{Rouge-L}}$\uparrow$ & \textbf{\textit{BertScore}}$\uparrow$  \\
        \midrule
        \midrule
        \texttt{Transformer}$^{\ddagger}$ & - & - & 27.22 & 57.48 & 83.81 & 26.93 & 49.07 & 73.81 \\
        \texttt{GPT2-large FT}$^{\ddagger}$ & - & - & 20.59 & 54.15 & 83.63 & 26.93 & 51.11 & 78.82	\\
        \midrule
        \texttt{CMLM}$^{\times}$ & - & 10 & 21.78 & 56.12 & - & 35.26 & 58.46 & 81.83\\
        \texttt{LevT}$^{\ddagger}$ & - & - & 22.68 & 57.95 & 83.44 & 20.52 & 44.02 & 72.54 \\
        \midrule
        \texttt{Difformer}$^*$ & 10 & 20 & 27.95 & \underline{59.24} & 82.97 & 34.78 & 54.55 & 78.86 \\
        \texttt{DINOISER}$^{\dagger}$ & 10 & 20 & 19.49 & 53.16 & 80.36 & 23.88 & 48.21 & 67.87 \\
        \texttt{DiffuSeq} & 10 & 2000 & 24.13 & 58.80 & \underline{83.65} & 36.22 & 58.49 & 81.26 \\
        \texttt{SeqDiffuSeq} & 10 & 2000 & 24.34 &	- & \textbf{84.00} & 37.12 & - & 82.14 \\
        $\text{\texttt{DiffuSeq-v2}}^*$ & 10 & 10 & 23.07 & 58.35 & 82.36 & 26.60 & 51.33 & 77.04 \\
        $\text{\texttt{FlowSeq}}^*$ & 10 & 1 & 14.30 &	46.10 & 66.90 & 29.02 & 53.74 & 72.46 \\
        \texttt{DLM-One} & 10 & 1 & 22.13 &	57.41 & 82.97 & 36.30 & 58.39 & 80.84 \\
        \midrule
        \framework & 10 & 1 & 27.16 &	58.10 & 81.16 & 38.20 & 57.66 & 80.35 \\ 
        \framework & 10 & 2 & 27.94 & 58.47 & 81.81 & 40.23 & 59.10 & 81.60 \\ 
        \framework & 10 & 5 & \textbf{28.88} & \textbf{59.34} & 82.58 & \textbf{40.90} & \textbf{59.64} & \underline{82.16} \\
        \framework & 10 & 20 & \underline{28.32} & 58.88 & 82.62 & \underline{40.81} & \textbf{59.64} & \textbf{82.17} \\
        \bottomrule
    \end{tabular}
    \caption{Main results on \textbf{QQP} and \textbf{Wiki-Auto}. 
    The best \texttt{NAR} results are \textbf{bold} and the second-best results are \underline{underlined}. $\ddagger$, $\times$, and $\dagger$ indicate results reported by \citet{DBLP:conf/iclr/GongLF0K23}, \citet{DBLP:conf/acl/TangWZLCZ23}, and \citet{DBLP:conf/nips/ChuangHLGLCL24}, respectively. $*$ indicates reproduced results.
    Other results are from their original papers. 
    }
    \label{tab:qqp_and_wa_results}
\end{table*}

\paragraph{Baselines.} 
We consider three baseline groups: (1) Autoregressive models: \texttt{Transformer}~\cite{DBLP:conf/nips/VaswaniSPUJGKP17}, \texttt{GRU} with attention, and fine-tuned \texttt{GPT2-Large}; (2) Non-autoregressive model: \texttt{CMLM}~\cite{DBLP:conf/emnlp/GhazvininejadLL19} and \texttt{LevT}~\cite{DBLP:conf/nips/GuWZ19}; and (3) Diffusion language models: \texttt{DiffusionLM}~\cite{DBLP:conf/nips/LiTGLH22}, \texttt{Difformer}~\cite{DBLP:conf/naacl/GaoG0ZZ0X24}, \texttt{DINOISER}~\cite{DBLP:journals/corr/abs-2302-10025}, \texttt{DiffuSeq}~\cite{DBLP:conf/iclr/GongLF0K23}, \texttt{SeqDiffuSeq}~\cite{DBLP:conf/naacl/YuanYTHH24}, and \texttt{AR-Diffusion}~\cite{DBLP:conf/nips/WuFLZGS0LWGDC23}.
For Summarization, we include \texttt{LSTM}~\cite{DBLP:journals/tnn/GreffSKSS17} and \texttt{NAG-BERT}~\cite{DBLP:conf/eacl/SuCWVBLC21}.
For Question Paraphrase, we include Discrete Diffusion with \texttt{RDM}~\cite{DBLP:journals/corr/abs-2302-05737}.
We also include few-step generation benchmarks, \texttt{DiffuSeq-v2}~\cite{DBLP:conf/emnlp/GongLF0K23}, \texttt{FlowSeq}~\cite{DBLP:conf/eacl/HuWAMFOS24} and \texttt{DLM-One}~\cite{DBLP:journals/corr/abs-2506-00290}.

\paragraph{Training and Inference.}
During training, we adopt \texttt{sqrt} noise schedule~\cite{DBLP:conf/nips/LiTGLH22} with $T = 2000$ diffusion steps. 
The anchor points $(\lambda_{\min}, \lambda_{\max})$ and $(\gamma_{\min}, \gamma_{\min})$ are $(0.9,0.95)$ and $(0.15,0.35)$, respectively.
\noisescaling is randomly applied with $50\%$ probability to fasten training. 
Our implementation is based on \texttt{Difformer}, with the same sampling configurations at $\text{NFE}\in\{2, 5, 20\}$. 
For every task, we construct the vocabulary with Byte Pair Encoding~\cite{DBLP:conf/emnlp/KudoR18}.
We also apply Minimum Bayes Risk (MBR) decoding~\cite{DBLP:conf/naacl/KumarB04} following previous works~\cite{DBLP:conf/nips/LiTGLH22,DBLP:conf/iclr/GongLF0K23}.
Further details are described in Appx.~\ref{sec:exp_conf} and~\ref{sec:mbr_abl}.

\subsection{Main Results}

\paragraph{Overall Performance.}
Tabs.~\ref{tab:iwslt14_bleu} and~\ref{tab:qqp_and_wa_results} summarize results across tasks and datasets.
Overall, \framework improves few-step diffusion generation, outperforming both \texttt{Diffusion} and \texttt{Non-autoregressive} baselines on most settings (varying MBR and NFEs), while approaching \texttt{Autoregressive} performance.
On \textbf{WMT14}, the $5$-step model even surpasses $20$-step sampling, indicating that \scheduler can effectively close the gap between few and many inference.

In terms of efficiency, \framework is $4\times$ faster than \texttt{DINOISER}, \texttt{Difformer}, and up to $400\times$ faster than long-trajectory methods such as \texttt{DiffuSeq} and \texttt{SeqDiffuSeq}.
Compared to one-step baselines, \framework remains competitive without relying on distillation (\texttt{DLM-One}) or flow-matching (\texttt{FlowSeq}) objectives (Tab.~\ref{tab:qqp_and_wa_results}).
Additional results on Text Summarization are reported in Appx.~\ref{sec:add_result}.

\paragraph{Sampling Speed.}

We analyze the quality--speed trade-off on \textbf{QQP} by comparing \framework with \texttt{DiffuSeq-v2} using its original self-conditioning mechanism~\cite{DBLP:conf/emnlp/GongLF0K23}.
Fig.~\ref{fig:speed-bleu} shows that \framework consistently achieves higher quality at low NFEs, highlighting its advantage in the few-step regime.
The margin narrows at medium NFEs, where \texttt{DiffuSeq-v2} slightly overtakes \framework, suggesting that the benefit of \scheduler diminishes once discretization error becomes small.

\paragraph{Sampling Diversity.}
We evaluate diversity on \textbf{QQP} using \textit{BLEU} and self-\textit{BLEU}.
Fig.~\ref{fig:self-bleu} shows the trade-off between diversity and quality.
Fig.~\ref{fig:self-bleu} shows that \framework-$2$NFE matches the strongest baseline, \texttt{Difformer}-$20$NFE, with 10$\times$ smaller bugdet.
Increasing NFEs improves quality at the expected cost of latency.
Notably, \framework-$5$NFE achieves substantially higher \textit{BLEU} while maintaining similar self-\textit{BLEU}, indicating that quality gains do not come at the expense of diversity.

\begin{figure}[t]
    \centering
    \includegraphics[width=0.7\linewidth]{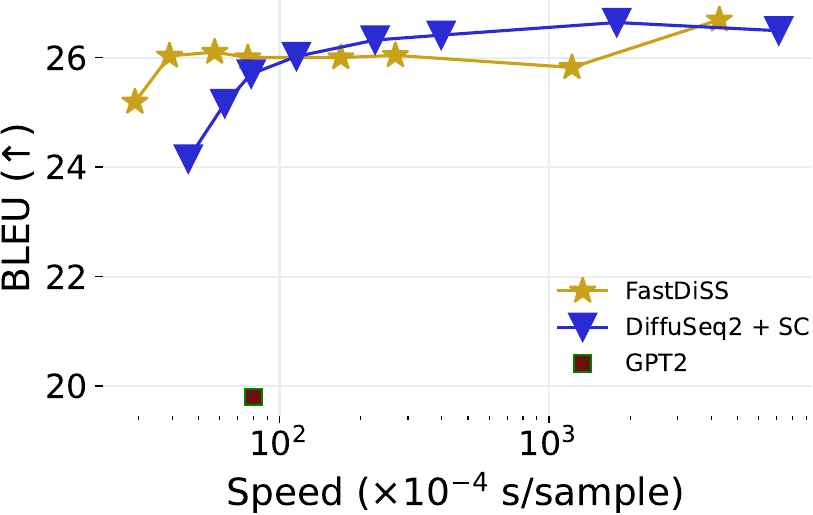}
    \caption{Generation speed and quality with different NFE. The speed is averaged over 3 runs.}
    \label{fig:speed-bleu}
\end{figure}

\begin{figure}[t]
    \centering
    \includegraphics[width=0.7\linewidth]{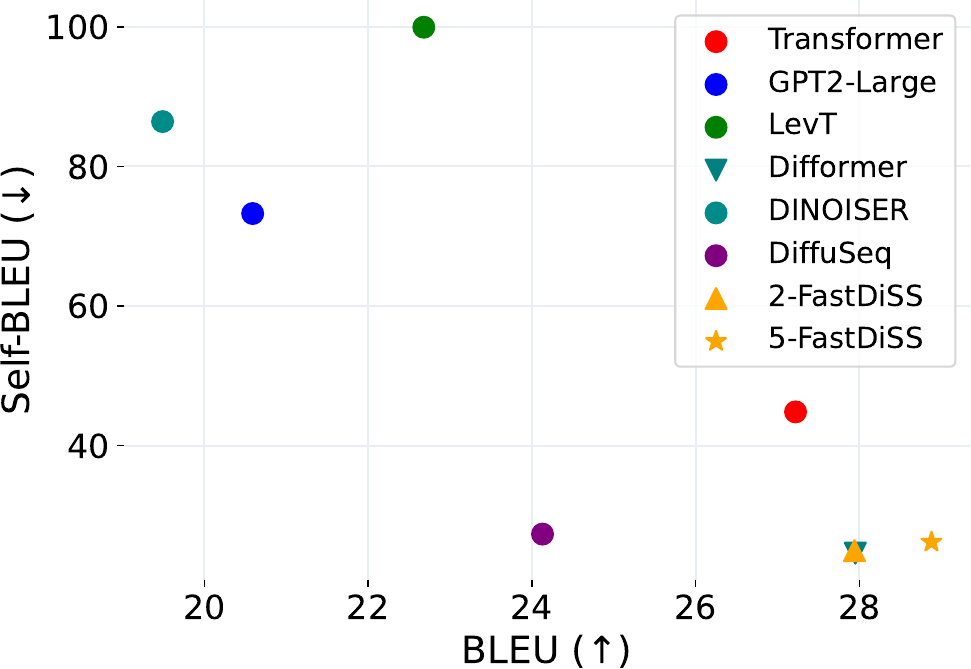}
    \caption{Diversity and quality comparison.}
    \label{fig:self-bleu}
\end{figure}

\subsection{Ablation Studies}
\textbf{Effect of Each Component.}
We quantify the contribution of \scheduler and \noisescaling in Tab.~\ref{tab:abl_comp}. 
The evaluation is conducted on \textbf{IWSLT14} De-En using \textit{BLEU}.
The results show that both components provide consistent gains over the base model.
\noisescaling is particularly effective at small NFEs, suggesting it improves prediction at large steps, reducing self-conditioning errors.
In contrast, \scheduler mainly improves inference by reducing the training-inference gap, which is most visible in the few-step regime.
Combining \scheduler and \noisescaling yields the strongest performance and significantly narrows the gap between 5-step and 20-step sampling.

\begin{table}[t]
    \centering
    \small
    \setlength{\tabcolsep}{4.5pt} 
    \renewcommand{\arraystretch}{0.8} 
    \begin{tabular}{c c | c c c}
        \toprule
        \textbf{\scheduler} & \textbf{\noisescaling} & $\text{NFE}=5$ & $\text{NFE}=20$ & $\Delta$NFE \\
        \midrule
        $\times$ & $\times$ & 27.98 & 29.78 & 1.80 \\
        $\checkmark$ & $\times$ & 29.64 & 30.36 & 0.72 \\
        $\times$ & $\checkmark$ & 30.77 & 31.49 & 0.72 \\
        $\checkmark$ & $\checkmark$ & 31.17 & 31.66 & 0.49 \\
        \bottomrule
    \end{tabular}
    \caption{Ablation study.}
    \label{tab:abl_comp}
\end{table}

\paragraph{Effect of Varying NFEs.}
We vary NFEs on \textbf{IWSLT14} De-En and compare \framework against the original codebase, \texttt{Difformer}.
Fig.~\ref{fig:abl_nfe} shows that \framework is markedly more effective under few-step inference: it surpasses the $20$-step baseline within $3$ sampling steps, corresponding to approximately $7\times$ speedup.
Moreover, performance converges after roughly $7$ steps, reaching a higher \textit{BLEU} than the $20$-step baseline.

\begin{figure}[t]
    \centering
    \includegraphics[width=\linewidth]{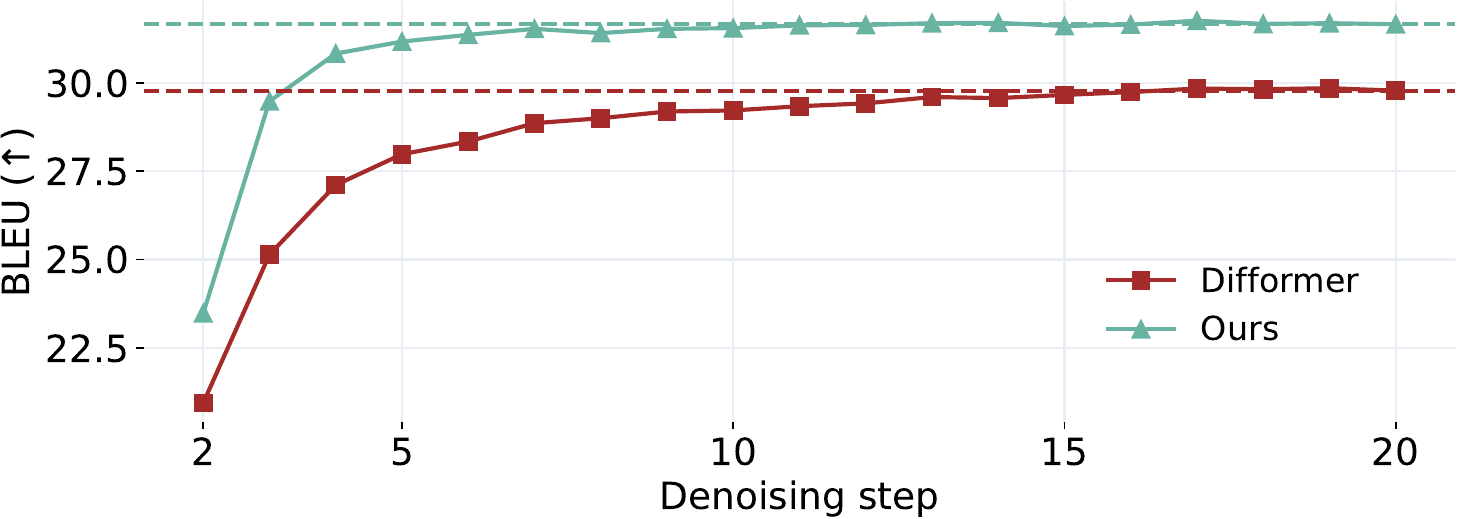}
    \caption{Effect of Number of Denoising Steps.}
    \label{fig:abl_nfe}
\end{figure}

\paragraph{Effect of $\gamma_t$ and $\lambda_t$.}
We study sensitivity to $\gamma_t$ and $\lambda_t$ in \scheduler (Eq.~\ref{eq:perturb_forward}) on \textbf{QQP}.
Tab.~\ref{tab:abl_sched_var} compares linear-time schedules against fixed variants where $\lambda_t$ and $\gamma_t$ are held constant.
The best results are obtained with settings derived from the 20-step ratios, where $\gamma_t$ ranges from $0.90$ to $0.95$ and $\lambda_t$ ranges from $0.15$ to $0.35$.
We use this configuration in the remaining experiments.

\begin{table}[t!]
    \centering
    \small
    \setlength{\tabcolsep}{2.2pt} 
    \begin{tabular}{l c c c c}
        \toprule
        \textbf{Steps} & 5 & 20 & 100 & Fixed \\
        \midrule
        \textbf{$\lambda_{t}$} & 0.60 - 0.85 & 0.90 - 0.95 & 0.98 - 0.99 & 0.85 \\
        \textbf{$\gamma_{t}$} & 0.25 - 0.90 & 0.15 - 0.35 & 0.05 - 0.15 & 0.50 \\
        \midrule
         \textbf{\textit{BLEU}} & 25.48 & \textbf{26.35} &  25.70 & 25.84 \\
         \textbf{\textit{ROUGE-L}} & 57.29 & \textbf{57.47} & 56.82 & 57.18 \\
        \bottomrule
    \end{tabular}
    \caption{Effect of $\gamma_t$ and $\lambda_t$.}
    \label{tab:abl_sched_var}
\end{table}

\paragraph{Effect of Noise Schedulers.}
Finally, we compare \framework under standard diffusion noise schedules, including \texttt{linear}~\cite{DBLP:conf/nips/HoJA20} and \texttt{cosine}~\cite{DBLP:conf/icml/NicholD21}, on \textbf{QQP}.
Tab.~\ref{tab:abl_sched} shows that the \texttt{linear} schedule is generally less sensitive to the choice of NFE.
\framework consistently yields improvements in the few-step regime across schedulers, confirming that our gains are not tied to a specific base schedule.

\begin{table}[t]
    \centering
    \small
    \setlength{\tabcolsep}{5.5pt} 
    \begin{tabular}{l c c c c}
        \toprule
        \multicolumn{2}{c}{\textbf{Noise Schedule}} & NFE=2 & NFE=5 & NFE=20 \\
        \midrule
         \multirow{2}{*}{\textbf{\texttt{linear}}} & \textbf{Orig.} & 26.50 & 27.17 & \textbf{27.54} \\
         & \textbf{Ours} & \textbf{26.84} & \textbf{27.52} & \textbf{27.54} \\
         \midrule
         \multirow{2}{*}{\textbf{\texttt{cosine}}} & \textbf{Orig.} &  25.43 & 27.05 & 27.57 \\
         & \textbf{Ours} & \textbf{27.32} & \textbf{27.77} & \textbf{27.94} \\
        \bottomrule
    \end{tabular}
    \caption{Effect of Noise Schedulers.}
    \label{tab:abl_sched}
\end{table}

\subsection{Extension to Large-scale Benchmark}
\label{subsec:gsm8k}
We evaluate \scheduler adaptability to reasoning benchmark \textbf{GSM8K}~\cite{DBLP:journals/corr/abs-2110-14168} and discrete setting in Tabs.~\ref{tab:gsm8k} and~\ref{tab:mdlm_result}.

\paragraph{Reasoning benchmark.} 
We adopt \texttt{DoT}~\cite{DBLP:conf/nips/YeGCZGSWJLBK24}, using \texttt{Plaid}-1B~\cite{DBLP:conf/nips/GulrajaniH23} as the backbone. 
We introduce \scheduler as a lightweight modification following \texttt{Plaid} training recipe. 
Tab.~\ref{tab:gsm8k} shows that \scheduler consistently improves accuracy under both standard diffusion inference and Chain-of-Thought (\texttt{CoT}) inference.
More importantly, when reducing NFEs, \scheduler substantially mitigates the drop in accuracy, indicating improved robustness in the low-compute regime.
Qualitative examples comparing \texttt{Plaid} with and without \texttt{SCP} are provided in Appx.~\ref{sec:qualitative_res}.


\begin{table}[t]
\centering
\small
\setlength{\tabcolsep}{3.6pt} 
\begin{tabular}{rccp{0.005mm}cc}
\toprule
\multirow{2}{*}{\textbf{NFE}} & \multicolumn{2}{c}{\textbf{\texttt{DoT} (\texttt{Plaid})}}& & \multicolumn{2}{c}{\textbf{$\texttt{DoT}^{\text{mp}}$ (\texttt{CoT})}} \\
\cmidrule{2-3}\cmidrule{5-6}
 & \textbf{Normal} & \textbf{\scheduler} & & \textbf{Normal} & \textbf{\scheduler} \\
\midrule
8  & 35.4 $\pm$ 0.5 & $\mathbf{38.2 \pm 0.7}$  & & 30.9 $\pm$ 0.6  & $\mathbf{35.1 \pm 1.0}$ \\
16 & 39.0 $\pm$ 0.5 & $\mathbf{41.1 \pm 0.7}$ & & 36.1 $\pm$ 0.7  & $\mathbf{39.8 \pm 0.8}$ \\
32 & 39.9 $\pm$ 0.3 & $\mathbf{43.0 \pm 1.1}$ & & 37.2 $\pm$ 0.7  & $\mathbf{40.4 \pm 0.9}$ \\
64 & 41.3 $\pm$ 0.2 & $\mathbf{43.7 \pm 0.2}$ & & 36.6 $\pm$ 0.6 & $\mathbf{40.8 \pm 0.6}$ \\
\bottomrule
\end{tabular}
\caption{Comparison of \textit{Accuracy} on \textbf{GSM8K} benchmark. We experimented with two settings, \texttt{DoT} with \texttt{Plaid} inference and $\texttt{DoT}^{\text{mp}}$ with \texttt{CoT} reasoning. The \textit{Accuracy} is averaged over 5 runs.}
\label{tab:gsm8k}
\end{table}

\paragraph{Discrete Diffusion Model Extension.} 
\texttt{MDLM}, the masking diffusion framework behind \texttt{LLaDA}~\cite{DBLP:journals/corr/abs-2502-09992}, operates via iterative masking and unmasking.
In this setting, \scheduler corresponds to increasing the masking rate during training, producing a more corrupted conditioning context that better matches few-step inference. 
As shown in Tab.~\ref{tab:mdlm_result}, \scheduler improves robustness at small NFEs compared to self-conditioning, and yields consistent gains over the original model as NFEs increase.

\begin{table}[t]
\centering
\small
\begin{tabular}{rrrr}
\toprule
\multirow{2}{*}{\textbf{NFE}} & \multicolumn{3}{c}{\textbf{Gen Perplexity ($\downarrow$)}} \\
\cmidrule(lr){2-4}
 & \textbf{None} & \textbf{SC} & \textbf{\scheduler} \\
\midrule
2    & 3178.74 & 3250.22 & \textbf{2873.67} \\
5    & 1144.42 & 1433.79 & \textbf{1215.41} \\
20   & 243.81  & 254.72  & \textbf{244.40} \\
50   & 143.82  & 133.79  & \textbf{130.52} \\
100  & 113.34  & 97.88   & \textbf{95.48}  \\
1000 & 55.23   & 45.95   & \textbf{45.21}  \\
5000 & 32.12   & \textbf{27.78}   & 28.14 \\
\bottomrule
\end{tabular}
\caption{Main results on \texttt{MDLM}. 
    The best \texttt{Generation Perplexity} results are \textbf{bold}.}
\label{tab:mdlm_result}
\end{table}
\section{Related Works}
\label{sec:relatedwork}
\textbf{Non-autoregressive Language Generation.}
Non-autoregressive models were early introduced to reduce generation latency by predicting tokens in parallel~\cite{DBLP:journals/corr/abs-1711-02281}. 
To improve accuracy, later work incorporated iterative refinement and editing-style decoding~\cite{DBLP:conf/nips/GuWZ19,DBLP:conf/emnlp/GhazvininejadLL19}. 
Despite these advances, the conditional independence assumption in many \texttt{NAR} formulations leads to a single mode selection issue. 
Prior efforts mitigate this issue via structured prediction~\cite{DBLP:conf/naacl/ZhangW0G0QL22,DBLP:conf/acl/RanLLZ20,DBLP:conf/icml/HuangTZLH22}, data augmentation and selection (e.g., reference rephrasing)~\cite{DBLP:conf/naacl/ShaoW022,DBLP:conf/aaai/ShaoZ0023}, and distillation to transfer knowledge from autoregressive teachers~\cite{DBLP:journals/corr/abs-2112-11640,DBLP:journals/corr/abs-2205-11162,DBLP:conf/aaai/LiuBZH23}.

\noindent\textbf{Diffusion Models for Language Generation.}
Text diffusion models can be broadly categorized into \emph{discrete} and \emph{continuous} formulations. Discrete diffusion defines Markov transitions directly over token space. Early approaches such as \texttt{D3PM}~\cite{DBLP:conf/nips/AustinJHTB21} and \texttt{Analog-bits}~\cite{DBLP:conf/iclr/ChenZH23} rely on absorbing or uniform transitions, while more recent methods adopt masking-style transitions that scale more naturally to language, including \texttt{MDLM}~\cite{DBLP:conf/nips/SahooASGMCRK24} and \texttt{RDM}~\cite{DBLP:journals/corr/abs-2302-05737}. 
These advances have enabled large-scale diffusion LLMs such as \texttt{LLaDA}~\cite{DBLP:journals/corr/abs-2502-09992}, \texttt{Dream-7B}~\cite{DBLP:journals/corr/abs-2508-15487}, and \texttt{Block Diffusion}~\cite{DBLP:conf/iclr/ArriolaGCYQHSK25}.
In contrast, continuous diffusion maps tokens into a continuous embedding space and applies Gaussian processes, enabling denoising in latent space.
Representative models include \texttt{DiffusionLM}~\cite{DBLP:conf/nips/LiTGLH22}, \texttt{Difformer}~\cite{DBLP:conf/naacl/GaoG0ZZ0X24}, \texttt{DINOISER}~\cite{DBLP:journals/corr/abs-2302-10025}, \texttt{SeqDiffuSeq}~\cite{DBLP:conf/naacl/YuanYTHH24}, \texttt{AR-Diffusion}~\cite{DBLP:conf/nips/WuFLZGS0LWGDC23}, and \texttt{DiffuSeq}~\cite{DBLP:conf/iclr/GongLF0K23}, spanning encoder-decoder and decoder-only designs. 
Several works further tailor diffusion to language by modifying the noise schedule or corruption process, including \texttt{Masked-Diffuse}~\cite{DBLP:conf/emnlp/ChenZ0SY23} and \texttt{Meta-Diffu$B$}~\cite{DBLP:conf/nips/ChuangHLGLCL24}.
Our method is developed in the continuous latent setting to explore the full potential of continuous modeling.

\noindent\textbf{Accelerating the Diffusion Language Model.}
Widely adopted methods, such as \texttt{DDIM}~\cite{DBLP:conf/iclr/SongME21}, have demonstrated success in prior text generation works. 
Advances in ODE solvers~\cite{DBLP:conf/nips/0011ZB0L022,DBLP:journals/ijautcomp/LuZBCLZ25} have made great success in enhancing the efficiency of the reverse process in \texttt{DiffuSeq-v2}~\cite{DBLP:conf/emnlp/GongLF0K23}. ~\citet{DBLP:conf/acl/TangWZLCZ23} also addresses training-inference discrepancies to enhance generation speed. 
Recently, \texttt{FlowSeq}~\cite{DBLP:conf/eacl/HuWAMFOS24} adapts flow matching for sequence modeling, while \texttt{DLM-One}~\cite{DBLP:journals/corr/abs-2506-00290} distills the teacher model, \texttt{DiffuSeq}, to learn a one-step generator.
Our work advances this line by promoting fast, adaptable, and few-step generation, aiming to make diffusion-based language models more viable for real-world applications.

\section{Conclusion}

We introduced \framework, a novel training framework for diffusion language models, combining {Self-conditioning Perturbation} (\scheduler) to align self-conditioning under few-step discretization and {Model-aware Noise Scaling} (\noisescaling) to increase noise on high-confidence tokens for more informative training.
Experiments on six benchmarks demonstrate that \framework enables more effective training, narrows the gap between few-step and many-step decoding, and achieves significant improvements in both efficiency and generation quality.
While our method advances the practicality of fast inference with diffusion models, future work will focus on further refining self-conditioning and closing the gap with autoregressive approaches.

\newpage
\section*{Limitation}
\label{sec:limit}
Since we aim to align training to inference, the proposed techniques are constrained by the goodness of the previous self-conditioning prediction. It would be more promising if we approached the problem from the opposite direction, as refining the prediction to be more accurate would significantly boost the performance. Next, our current noise scaling strategy relies on predefined values at each training phase, which may be suboptimal when earlier scaling stages are insufficiently trained. A more adaptive scaling function could further enhance performance.

\section*{Broader Impact}
This work advances diffusion-based language modeling by proposing mechanisms that improve training efficiency and training-inference alignment. By reducing exposure bias and accelerating convergence, our approach contributes to the development of faster, more robust text generation systems. These improvements can benefit a wide range of natural language applications, including machine translation, summarization, and question answering, where efficiency and quality are critical.

Despite these potential benefits, diffusion-based models, like other large generative models, may still propagate social biases or produce harmful or misleading content when trained on unfiltered data. We emphasize the importance of responsible deployment, including careful dataset selection, bias evaluation, and human oversight. 



%
\bibliography{custom}

@inproceedings{DBLP:conf/nips/HoJA20,
  author       = {Jonathan Ho and
                  Ajay Jain and
                  Pieter Abbeel},
  title        = {Denoising Diffusion Probabilistic Models},
  booktitle    = {Proceedings of the Annual Conference
                  on Neural Information Processing Systems (NeurIPS)},
  year         = {2020},
}

@article{DBLP:journals/corr/abs-2302-10025,
  author       = {Jiasheng Ye and
                  Zaixiang Zheng and
                  Yu Bao and
                  Lihua Qian and
                  Mingxuan Wang},
  title        = {{DINOISER:} Diffused Conditional Sequence Learning by Manipulating Noises},
  journal      = {CoRR},
  volume       = {abs/2302.10025},
  year         = {2023},
}

@inproceedings{DBLP:conf/nips/WuFLZGS0LWGDC23,
  author       = {Tong Wu and
                  Zhihao Fan and
                  Xiao Liu and
                  Hai{-}Tao Zheng and
                  Yeyun Gong and
                  Yelong Shen and
                  Jian Jiao and
                  Juntao Li and
                  Zhongyu Wei and
                  Jian Guo and
                  Nan Duan and
                  Weizhu Chen},
  title        = {{AR-Diffusion:} Auto-Regressive Diffusion Model for Text Generation},
  booktitle    = {Proceedings of the Annual Conference on Neural Information Processing Systems (NeurIPS)},
  year         = {2023},
}

@inproceedings{DBLP:conf/iclr/GongLF0K23,
  author       = {Shansan Gong and
                  Mukai Li and
                  Jiangtao Feng and
                  Zhiyong Wu and
                  Lingpeng Kong},
  title        = {{DiffuSeq:} Sequence to Sequence Text Generation with Diffusion Models},
  booktitle    = {Proceedings of the International Conference on Learning Representations (ICLR)},
  year         = {2023},
}

@inproceedings{DBLP:conf/nips/LiTGLH22,
  author       = {Xiang Lisa Li and
                  John Thickstun and
                  Ishaan Gulrajani and
                  Percy Liang and
                  Tatsunori B. Hashimoto},
  title        = {{Diffusion-LM} Improves Controllable Text Generation},
  booktitle    = {Proceedings of the Annual Conference on Neural Information Processing Systems (NeurIPS)},
  year         = {2022},
}

@inproceedings{DBLP:conf/icml/NicholD21,
  author       = {Alexander Quinn Nichol and
                  Prafulla Dhariwal},
  title        = {Improved Denoising Diffusion Probabilistic Models},
  booktitle    = {Proceedings of the International Conference on Machine Learning (ICML)},
  pages        = {8162--8171},
  year         = {2021},
}

@inproceedings{DBLP:conf/naacl/GaoG0ZZ0X24,
  author       = {Zhujin Gao and
                  Junliang Guo and
                  Xu Tan and
                  Yongxin Zhu and
                  Fang Zhang and
                  Jiang Bian and
                  Linli Xu},
  title        = {Empowering Diffusion Models on the Embedding Space for Text Generation},
  booktitle    = {Proceedings of the Conference of the North American Chapter of the Association for Computational Linguistics: Human Language Technologies (NAACL)},
  pages        = {4664--4683},
  year         = {2024},
}

@inproceedings{DBLP:conf/nips/ChuangHLGLCL24,
  author       = {Yun{-}Yen Chuang and
                  Hung{-}Min Hsu and
                  Kevin Lin and
                  Chen{-}Sheng Gu and
                  Ling Zhen Li and
                  Ray{-}I Chang and
                  Hung{-}yi Lee},
  title        = {{Meta-DiffuB:} {A} Contextualized Sequence-to-Sequence Text Diffusion
                  Model with Meta-Exploration},
  booktitle    = {Proceedings of the Annual Conference on Neural Information Processing Systems (NeurIPS)},
  year         = {2024},
}

@inproceedings{DBLP:conf/iclr/NingLSSE24,
  author       = {Mang Ning and
                  Mingxiao Li and
                  Jianlin Su and
                  Albert Ali Salah and
                  Itir {\"{O}}nal Ertugrul},
  title        = {Elucidating the Exposure Bias in Diffusion Models},
  booktitle    = {Proceedings of the International Conference on Learning Representations (ICLR)},
  year         = {2024},
}

@inproceedings{DBLP:conf/iclr/0011SKKEP21,
  author       = {Yang Song and
                  Jascha Sohl{-}Dickstein and
                  Diederik P. Kingma and
                  Abhishek Kumar and
                  Stefano Ermon and
                  Ben Poole},
  title        = {Score-Based Generative Modeling through Stochastic Differential Equations},
  booktitle    = {Proceedings of the International Conference on Learning Representations (ICLR)},
  year         = {2021},
}

@inproceedings{DBLP:conf/iclr/ChenZH23,
  author       = {Ting Chen and
                  Ruixiang Zhang and
                  Geoffrey E. Hinton},
  title        = {Analog Bits: Generating Discrete Data using Diffusion Models with
                  Self-Conditioning},
  booktitle    = {Proceedings of the International Conference on Learning Representations (ICLR)},
  year         = {2023},
}

@inproceedings{DBLP:conf/icml/BaoLSZZ22,
  author       = {Fan Bao and
                  Chongxuan Li and
                  Jiacheng Sun and
                  Jun Zhu and
                  Bo Zhang},
  title        = {Estimating the Optimal Covariance with Imperfect Mean in Diffusion Probabilistic Models},
  booktitle    = {Proceedings of the International Conference on Machine Learning (ICML)},
  pages        = {1555--1584},
  year         = {2022},
}

@inproceedings{DBLP:conf/nips/GulrajaniH23,
  author       = {Ishaan Gulrajani and
                  Tatsunori B. Hashimoto},
  editor       = {Alice Oh and
                  Tristan Naumann and
                  Amir Globerson and
                  Kate Saenko and
                  Moritz Hardt and
                  Sergey Levine},
  title        = {Likelihood-Based Diffusion Language Models},
  booktitle    = {Proceedings of the Annual Conference on Neural Information Processing Systems (NeurIPS)},
  year         = {2023},
}

@inproceedings{DBLP:conf/naacl/YuanYTHH24,
  author       = {Hongyi Yuan and
                  Zheng Yuan and
                  Chuanqi Tan and
                  Fei Huang and
                  Songfang Huang},
  title        = {Text Diffusion Model with Encoder-Decoder Transformers for Sequence-to-Sequence Generation},
  booktitle    = {Proceedings of the Conference of the North American Chapter of the Association for Computational Linguistics: Human Language Technologies (NAACL)},
  pages        = {22--39},
  year         = {2024},
}

@inproceedings{DBLP:conf/icml/NingSPCC23,
  author       = {Mang Ning and
                  Enver Sangineto and
                  Angelo Porrello and
                  Simone Calderara and
                  Rita Cucchiara},
  title        = {Input Perturbation Reduces Exposure Bias in Diffusion Models},
  booktitle    = {Proceedings of the International Conference on Machine Learning (ICML)},
  pages        = {26245--26265},
  year         = {2023},
}

@inproceedings{DBLP:conf/acl/TangWZLCZ23,
  author       = {Zecheng Tang and
                  Pinzheng Wang and
                  Keyan Zhou and
                  Juntao Li and
                  Ziqiang Cao and
                  Min Zhang},
  title        = {Can Diffusion Model Achieve Better Performance in Text Generation? Bridging the Gap between Training and Inference!},
  booktitle    = {Findings of the ACL},
  pages        = {11359--11386},
  year         = {2023},
}

@inproceedings{DBLP:conf/emnlp/ChenZ0SY23,
  author       = {Jiaao Chen and
                  Aston Zhang and
                  Mu Li and
                  Alex Smola and
                  Diyi Yang},
  editor       = {Houda Bouamor and
                  Juan Pino and
                  Kalika Bali},
  title        = {A Cheaper and Better Diffusion Language Model with Soft-Masked Noise},
  booktitle    = {Proceedings of the Conference on Empirical Methods in Natural Language Processing (EMNLP)},
  pages        = {4765--4775},
  year         = {2023},
}

@inproceedings{DBLP:conf/wmt/BojarBFHKLMPPSS14,
  author       = {Ondrej Bojar and
                  Christian Buck and
                  Christian Federmann and
                  Barry Haddow and
                  Philipp Koehn and
                  Johannes Leveling and
                  Christof Monz and
                  Pavel Pecina and
                  Matt Post and
                  Herve Saint{-}Amand and
                  Radu Soricut and
                  Lucia Specia and
                  Ales Tamchyna},
  title        = {Findings of the 2014 Workshop on Statistical Machine Translation},
  booktitle    = {Proceedings of the Workshop on Statistical Machine Translation (WMT@ACL)},
  pages        = {12--58},
  year         = {2014},
}

@inproceedings{DBLP:conf/iwslt/CettoloNSBF14,
  author       = {Mauro Cettolo and
                  Jan Niehues and
                  Sebastian St{\"{u}}ker and
                  Luisa Bentivogli and
                  Marcello Federico},
  title        = {Report on the 11th {IWSLT} Evaluation Campaign},
  booktitle    = {Proceedings of the International Workshop on Spoken Language Translation: Evaluation Campaign (IWSLT)},
  pages        = {2--17},
  year         = {2014},
}

@inproceedings{DBLP:conf/emnlp/NarayanCL18,
  author       = {Shashi Narayan and
                  Shay B. Cohen and
                  Mirella Lapata},
  title        = {Don't Give Me the Details, Just the Summary! Topic-Aware Convolutional
                  Neural Networks for Extreme Summarization},
  booktitle    = {Proceedings of the Conference on Empirical Methods in Natural Language Processing EMNLP},
  pages        = {1797--1807},
  year         = {2018},
}

@inproceedings{DBLP:conf/nips/VaswaniSPUJGKP17,
  author       = {Ashish Vaswani and
                  Noam Shazeer and
                  Niki Parmar and
                  Jakob Uszkoreit and
                  Llion Jones and
                  Aidan N. Gomez and
                  Lukasz Kaiser and
                  Illia Polosukhin},
  title        = {Attention is All you Need},
  booktitle    = {Proceedings of the Annual Conference on Neural Information Processing Systems (NeurIPS)},
  pages        = {5998--6008},
  year         = {2017},
}

@inproceedings{DBLP:conf/nips/GuWZ19,
  author       = {Jiatao Gu and
                  Changhan Wang and
                  Junbo Zhao},
  title        = {Levenshtein Transformer},
  booktitle    = {Proceedings of the Annual Conference on Neural Information Processing Systems (NeurIPS)},
  pages        = {11179--11189},
  year         = {2019},
}

@article{DBLP:journals/corr/abs-1711-02281,
  author       = {Jiatao Gu and
                  James Bradbury and
                  Caiming Xiong and
                  Victor O. K. Li and
                  Richard Socher},
  title        = {Non-Autoregressive Neural Machine Translation},
  journal      = {CoRR},
  volume       = {abs/1711.02281},
  year         = {2017},
}

@article{DBLP:journals/tnn/GreffSKSS17,
  author       = {Klaus Greff and
                  Rupesh Kumar Srivastava and
                  Jan Koutn{\'{\i}}k and
                  Bas R. Steunebrink and
                  J{\"{u}}rgen Schmidhuber},
  title        = {{LSTM:} {A} Search Space Odyssey},
  journal      = {{IEEE} Trans. Neural Networks Learn. Syst.},
  volume       = {28},
  number       = {10},
  pages        = {2222--2232},
  year         = {2017},
}

@article{DBLP:journals/corr/abs-2302-05737,
  author       = {Lin Zheng and
                  Jianbo Yuan and
                  Lei Yu and
                  Lingpeng Kong},
  title        = {A Reparameterized Discrete Diffusion Model for Text Generation},
  journal      = {CoRR},
  volume       = {abs/2302.05737},
  year         = {2023},
}

@inproceedings{DBLP:conf/emnlp/GongLF0K23,
  author       = {Shansan Gong and
                  Mukai Li and
                  Jiangtao Feng and
                  Zhiyong Wu and
                  Lingpeng Kong},
  title        = {{DiffuSeq-v2:} Bridging Discrete and Continuous Text Spaces for Accelerated Seq2Seq Diffusion Models},
  booktitle    = {Findings of the EMNLP},
  pages        = {9868--9875},
  year         = {2023},
}

@article{radford2018improving,
  title={Improving language understanding by generative pre-training},
  author={Radford, Alec and Narasimhan, Karthik and Salimans, Tim and Sutskever, Ilya and others},
  year={2018},
  publisher={San Francisco, CA, USA}
}

@inproceedings{DBLP:conf/iclr/ArriolaGCYQHSK25,
  author       = {Marianne Arriola and
                  Aaron Gokaslan and
                  Justin T. Chiu and
                  Zhihan Yang and
                  Zhixuan Qi and
                  Jiaqi Han and
                  Subham Sekhar Sahoo and
                  Volodymyr Kuleshov},
  title        = {Block Diffusion: Interpolating Between Autoregressive and Diffusion
                  Language Models},
  booktitle    = {Proceedings of the International Conference on Learning Representations (ICLR)},
  year         = {2025},
}

@inproceedings{DBLP:conf/emnlp/Schmidt19,
  author       = {Florian Schmidt},
  title        = {Generalization in Generation: {A} closer look at Exposure Bias},
  booktitle    = {Proceedings of the Workshop on Neural Generation and Translation (GNT@EMNLP-IJCNLP)},
  pages        = {157--167},
  year         = {2019},
}

@inproceedings{DBLP:conf/acl/PapineniRWZ02,
  author       = {Kishore Papineni and
                  Salim Roukos and
                  Todd Ward and
                  Wei{-}Jing Zhu},
  title        = {Bleu: a Method for Automatic Evaluation of Machine Translation},
  booktitle    = {Proceedings of the Annual Meeting of the Association for Computational Linguistics (ACL)},
  pages        = {311--318},
  year         = {2002},
}

@inproceedings{DBLP:conf/emnlp/QiYGLDCZ020,
  author       = {Weizhen Qi and
                  Yu Yan and
                  Yeyun Gong and
                  Dayiheng Liu and
                  Nan Duan and
                  Jiusheng Chen and
                  Ruofei Zhang and
                  Ming Zhou},
  title        = {ProphetNet: Predicting Future {N}-gram for Sequence-to-Sequence Pre-training},
  booktitle    = {Findings of EMNLP},
  pages        = {2401--2410},
  year         = {2020},
}

@inproceedings{DBLP:conf/emnlp/KudoR18,
  author       = {Taku Kudo and
                  John Richardson},
  title        = {SentencePiece: {A} simple and language independent subword tokenizer
                  and detokenizer for Neural Text Processing},
  booktitle    = {Proceedings of the Conference on Empirical Methods in Natural Language Processing (EMNLP)},
  pages        = {66--71},
  year         = {2018},
}

@inproceedings{DBLP:conf/emnlp/GhazvininejadLL19,
  author       = {Marjan Ghazvininejad and
                  Omer Levy and
                  Yinhan Liu and
                  Luke Zettlemoyer},
  title        = {Mask-Predict: Parallel Decoding of Conditional Masked Language Models},
  booktitle    = {Proceedings of the Conference on Empirical Methods in Natural Language Processing and the International Joint Conference on Natural Language Processing (EMNLP-IJCNLP)},
  pages        = {6111--6120},
  year         = {2019},
}

@inproceedings{DBLP:conf/naacl/KumarB04,
  author       = {Shankar Kumar and
                  William J. Byrne},
  title        = {Minimum Bayes-Risk Decoding for Statistical Machine Translation},
  booktitle    = {Proceedings of the Conference of the North American Chapter of the Association for Computational Linguistics: Human Language Technologies (NAACL)},
  pages        = {169--176},
  year         = {2004},
}

@article{shapiro1965analysis,
  title={An analysis of variance test for normality (complete samples)},
  author={Shapiro, Samuel Sanford and Wilk, Martin B},
  journal={Biometrika},
  volume={52},
  number={3-4},
  pages={591--611},
  year={1965},
  publisher={Oxford University Press}
}

@article{radford2019language,
  title={Language models are unsupervised multitask learners},
  author={Radford, Alec and Wu, Jeffrey and Child, Rewon and Luan, David and Amodei, Dario and Sutskever, Ilya and others},
  journal={OpenAI Blog},
  volume={1},
  number={8},
  pages={9},
  year={2019}
}

@inproceedings{DBLP:conf/nips/BrownMRSKDNSSAA20,
  author       = {Tom B. Brown and
                  Benjamin Mann and
                  Nick Ryder and
                  Melanie Subbiah and
                  Jared Kaplan and
                  Prafulla Dhariwal and
                  Arvind Neelakantan and
                  Pranav Shyam and
                  Girish Sastry and
                  Amanda Askell and
                  Sandhini Agarwal and
                  Ariel Herbert{-}Voss and
                  Gretchen Krueger and
                  Tom Henighan and
                  Rewon Child and
                  Aditya Ramesh and
                  Daniel M. Ziegler and
                  Jeffrey Wu and
                  Clemens Winter and
                  Christopher Hesse and
                  Mark Chen and
                  Eric Sigler and
                  Mateusz Litwin and
                  Scott Gray and
                  Benjamin Chess and
                  Jack Clark and
                  Christopher Berner and
                  Sam McCandlish and
                  Alec Radford and
                  Ilya Sutskever and
                  Dario Amodei},
  title        = {Language Models are Few-Shot Learners},
  booktitle    = {Proceedings of the Annual Conference on Neural Information Processing Systems (NeurIPS)},
  year         = {2020},
}

@inproceedings{DBLP:conf/iclr/ZhangKWWA20,
  author       = {Tianyi Zhang and
                  Varsha Kishore and
                  Felix Wu and
                  Kilian Q. Weinberger and
                  Yoav Artzi},
  title        = {BERTScore: Evaluating Text Generation with {BERT}},
  booktitle    = {Proceedings of the International Conference on Learning Representations (ICLR)},
  year         = {2020},
}

@inproceedings{DBLP:conf/sigir/ZhuLZGZWY18,
  author       = {Yaoming Zhu and
                  Sidi Lu and
                  Lei Zheng and
                  Jiaxian Guo and
                  Weinan Zhang and
                  Jun Wang and
                  Yong Yu},
  title        = {Texygen: {A} Benchmarking Platform for Text Generation Models},
  booktitle    = {Proceedings of the {ACM} {SIGIR} International Conference on Research {\&} Development in Information Retrieval (SIGIR)},
  pages        = {1097--1100},
  year         = {2018},
}

@inproceedings{DBLP:conf/eacl/SuCWVBLC21,
  author       = {Yixuan Su and
                  Deng Cai and
                  Yan Wang and
                  David Vandyke and
                  Simon Baker and
                  Piji Li and
                  Nigel Collier},
  title        = {Non-Autoregressive Text Generation with Pre-trained Language Models},
  booktitle    = {Proceedings of the Conference of the European Chapter of the Association for Computational Linguistics: Main Volume (EACL)},
  pages        = {234--243},
  year         = {2021},
}

@inproceedings{DBLP:conf/acl/GuK21,
  author       = {Jiatao Gu and
                  Xiang Kong},
  title        = {Fully Non-autoregressive Neural Machine Translation: Tricks of the
                  Trade},
  booktitle    = {Findings of ACL},
  pages        = {120--133},
  year         = {2021},
}

@inproceedings{DBLP:conf/nips/SahooASGMCRK24,
  author       = {Subham S. Sahoo and
                  Marianne Arriola and
                  Yair Schiff and
                  Aaron Gokaslan and
                  Edgar Marroquin and
                  Justin T. Chiu and
                  Alexander Rush and
                  Volodymyr Kuleshov},
  title        = {Simple and Effective Masked Diffusion Language Models},
  booktitle    = {Proceedings of the Annual Conference on Neural Information Processing Systems (NeurIPS)},
  year         = {2024},
}

@inproceedings{DBLP:conf/nips/AustinJHTB21,
  author       = {Jacob Austin and
                  Daniel D. Johnson and
                  Jonathan Ho and
                  Daniel Tarlow and
                  Rianne van den Berg},
  title        = {Structured Denoising Diffusion Models in Discrete State-Spaces},
  booktitle    = {Proceedings of the Annual Conference on Neural Information Processing Systems (NeurIPS)},
  pages        = {17981--17993},
  year         = {2021},
}

@inproceedings{DBLP:conf/naacl/ZhangW0G0QL22,
  author       = {Kexun Zhang and
                  Rui Wang and
                  Xu Tan and
                  Junliang Guo and
                  Yi Ren and
                  Tao Qin and
                  Tie{-}Yan Liu},
  title        = {A Study of Syntactic Multi-Modality in Non-Autoregressive Machine Translation},
  booktitle    = {Proceedings of the Conference of the North American Chapter of the Association for Computational Linguistics: Human Language Technologies (NAACL)},
  pages        = {1747--1757},
  year         = {2022},
}

@inproceedings{DBLP:conf/acl/RanLLZ20,
  author       = {Qiu Ran and
                  Yankai Lin and
                  Peng Li and
                  Jie Zhou},
  title        = {Learning to Recover from Multi-Modality Errors for Non-Autoregressive
                  Neural Machine Translation},
  booktitle    = {Proceedings of the Annual Meeting of the Association for Computational Linguistics (ACL)},
  pages        = {3059--3069},
  year         = {2020},
}

@inproceedings{DBLP:conf/aaai/ShaoZ0023,
  author       = {Chenze Shao and
                  Jinchao Zhang and
                  Jie Zhou and
                  Yang Feng},
  title        = {Rephrasing the Reference for Non-autoregressive Machine Translation},
  booktitle    = {Proceedings of the {AAAI} Conference on Artificial Intelligence (AAAI)},
  pages        = {13538--13546},
  year         = {2023},
}

@inproceedings{DBLP:conf/naacl/ShaoW022,
  author       = {Chenze Shao and
                  Xuanfu Wu and
                  Yang Feng},
  title        = {One Reference Is Not Enough: Diverse Distillation with Reference Selection  for Non-Autoregressive Translation},
  booktitle    = {Proceedings of the Conference of the North American Chapter of the Association for Computational Linguistics: Human Language Technologies (NAACL)},
  pages        = {3779--3791},
  year         = {2022},
}

@inproceedings{DBLP:conf/icml/HuangTZLH22,
  author       = {Fei Huang and
                  Tianhua Tao and
                  Hao Zhou and
                  Lei Li and
                  Minlie Huang},
  title        = {On the Learning of Non-Autoregressive Transformers},
  booktitle    = {Proceedings of the International Conference on Machine Learning (ICML)},
  pages        = {9356--9376},
  year         = {2022},
}

@article{DBLP:journals/corr/abs-2112-11640,
  author       = {Jiaxin Guo and
                  Minghan Wang and
                  Daimeng Wei and
                  Hengchao Shang and
                  Yuxia Wang and
                  Zongyao Li and
                  Zhengzhe Yu and
                  Zhanglin Wu and
                  Yimeng Chen and
                  Chang Su and
                  Min Zhang and
                  Lizhi Lei and
                  Shimin Tao and
                  Hao Yang},
  title        = {Self-Distillation Mixup Training for Non-autoregressive Neural Machine Translation},
  journal      = {CoRR},
  volume       = {abs/2112.11640},
  year         = {2021},
}

@article{DBLP:journals/corr/abs-2205-11162,
  author       = {Weizhen Qi and
                  Yeyun Gong and
                  Yelong Shen and
                  Jian Jiao and
                  Yu Yan and
                  Houqiang Li and
                  Ruofei Zhang and
                  Weizhu Chen and
                  Nan Duan},
  title        = {A Self-Paced Mixed Distillation Method for Non-Autoregressive Generation},
  journal      = {CoRR},
  volume       = {abs/2205.11162},
  year         = {2022},
}

@inproceedings{DBLP:conf/aaai/LiuBZH23,
  author       = {Min Liu and
                  Yu Bao and
                  Chengqi Zhao and
                  Shujian Huang},
  title        = {Selective Knowledge Distillation for Non-Autoregressive Neural Machine
                  Translation},
  booktitle    = {Proceedings of the {AAAI} Conference on Artificial Intelligence (AAAI)},
  pages        = {13246--13254},
  year         = {2023},
}

@inproceedings{DBLP:conf/iclr/SongME21,
  author       = {Jiaming Song and
                  Chenlin Meng and
                  Stefano Ermon},
  title        = {Denoising Diffusion Implicit Models},
  booktitle    = {Proceedings of the International Conference on Learning Representations (ICLR)},
  year         = {2021},
}

@inproceedings{DBLP:conf/nips/0011ZB0L022,
  author       = {Cheng Lu and
                  Yuhao Zhou and
                  Fan Bao and
                  Jianfei Chen and
                  Chongxuan Li and
                  Jun Zhu},
  title        = {DPM-Solver: {A} Fast {ODE} Solver for Diffusion Probabilistic Model Sampling in Around 10 Steps},
  booktitle    = {Proceedings of the Annual Conference on Neural Information Processing Systems (NeurIPS)},
  year         = {2022},
}

@article{DBLP:journals/ijautcomp/LuZBCLZ25,
  author       = {Cheng Lu and
                  Yuhao Zhou and
                  Fan Bao and
                  Jianfei Chen and
                  Chongxuan Li and
                  Jun Zhu},
  title        = {DPM-Solver++: Fast Solver for Guided Sampling of Diffusion Probabilistic
                  Models},
  journal      = {Mach. Intell. Res.},
  volume       = {22},
  number       = {4},
  pages        = {730--751},
  year         = {2025},
}

@inproceedings{DBLP:conf/icml/SongD0S23,
  author       = {Yang Song and
                  Prafulla Dhariwal and
                  Mark Chen and
                  Ilya Sutskever},
  title        = {Consistency Models},
  booktitle    = {Proceedings of the International Conference on Machine Learning (ICML)},
  pages        = {32211--32252},
  year         = {2023},
}

@inproceedings{DBLP:conf/nips/YeGCZGSWJLBK24,
  author       = {Jiacheng Ye and
                  Shansan Gong and
                  Liheng Chen and
                  Lin Zheng and
                  Jiahui Gao and
                  Han Shi and
                  Chuan Wu and
                  Xin Jiang and
                  Zhenguo Li and
                  Wei Bi and
                  Lingpeng Kong},
  title        = {Diffusion of Thought: Chain-of-Thought Reasoning in Diffusion Language Models},
  booktitle    = {Proceedings of the Annual Conference on Neural Information Processing Systems (NeurIPS)},
  year         = {2024},
}

@article{DBLP:journals/corr/abs-2110-14168,
  author       = {Karl Cobbe and
                  Vineet Kosaraju and
                  Mohammad Bavarian and
                  Mark Chen and
                  Heewoo Jun and
                  Lukasz Kaiser and
                  Matthias Plappert and
                  Jerry Tworek and
                  Jacob Hilton and
                  Reiichiro Nakano and
                  Christopher Hesse and
                  John Schulman},
  title        = {Training Verifiers to Solve Math Word Problems},
  journal      = {CoRR},
  volume       = {abs/2110.14168},
  year         = {2021},
}

@article{DBLP:journals/corr/abs-2507-15857,
  author       = {Mihir Prabhudesai and
                  Mengning Wu and
                  Amir Zadeh and
                  Katerina Fragkiadaki and
                  Deepak Pathak},
  title        = {Diffusion Beats Autoregressive in Data-Constrained Settings},
  journal      = {CoRR},
  volume       = {abs/2507.15857},
  year         = {2025}
}

@article{DBLP:journals/corr/abs-2502-09992,
  author       = {Shen Nie and
                  Fengqi Zhu and
                  Zebin You and
                  Xiaolu Zhang and
                  Jingyang Ou and
                  Jun Hu and
                  Jun Zhou and
                  Yankai Lin and
                  Ji{-}Rong Wen and
                  Chongxuan Li},
  title        = {Large Language Diffusion Models},
  journal      = {CoRR},
  volume       = {abs/2502.09992},
  year         = {2025},
}

@article{DBLP:journals/corr/abs-2508-15487,
  author       = {Jiacheng Ye and
                  Zhihui Xie and
                  Lin Zheng and
                  Jiahui Gao and
                  Zirui Wu and
                  Xin Jiang and
                  Zhenguo Li and
                  Lingpeng Kong},
  title        = {Dream 7B: Diffusion Large Language Models},
  journal      = {CoRR},
  volume       = {abs/2508.15487},
  year         = {2025},
}

@inproceedings{DBLP:conf/eacl/HuWAMFOS24,
  author       = {Vincent Tao Hu and
                  Di Wu and
                  Yuki Markus Asano and
                  Pascal Mettes and
                  Basura Fernando and
                  Bj{\"{o}}rn Ommer and
                  Cees Snoek},
  title        = {Flow Matching for Conditional Text Generation in a Few Sampling Steps},
  booktitle    = {Proceedings of the 18th Conference of the European Chapter of the
                  Association for Computational Linguistics, {EACL} 2024 - Volume 2:
                  Short Papers, St. Julian's, Malta, March 17-22, 2024},
  pages        = {380--392},
  publisher    = {Association for Computational Linguistics},
  year         = {2024},
}

@article{DBLP:journals/corr/abs-2506-00290,
  author       = {Tianqi Chen and
                  Shujian Zhang and
                  Mingyuan Zhou},
  title        = {DLM-One: Diffusion Language Models for One-Step Sequence Generation},
  journal      = {CoRR},
  volume       = {abs/2506.00290},
  year         = {2025},
}

@inproceedings{DBLP:conf/nips/DarasDDD23,
  author       = {Giannis Daras and
                  Yuval Dagan and
                  Alex Dimakis and
                  Constantinos Daskalakis},
  title        = {Consistent Diffusion Models: Mitigating Sampling Drift by Learning
                  to be Consistent},
  booktitle    = {Advances in Neural Information Processing Systems 36: Annual Conference
                  on Neural Information Processing Systems 2023, NeurIPS 2023, New Orleans,
                  LA, USA, December 10 - 16, 2023},
  year         = {2023},
}
\clearpage
\newpage
\appendix

\begin{algorithm}[tb]
    \caption{Training with \scheduler}
    \label{alg:train}
    \textbf{Input}: Text sequence $\boldsymbol{x}$, denoising network $D_\theta$
    \begin{algorithmic}[1]
        \While{not converged}
        \State $ \boldsymbol{z}_0 \sim q_\theta(\boldsymbol{z}_0|\boldsymbol{x})$
        \State $\ t \sim \mathcal{U}(\epsilon, 1)$, $\boldsymbol{\epsilon} \sim \mathcal{N}(0, \mathbf{I})$
        \State \textcolor{blue}{$\boldsymbol{z}_{t}' = \alpha_t\lambda_t \boldsymbol{z}_0 + \sigma_t\sqrt{1+\gamma_t^2} \boldsymbol{\epsilon}$} 
        \State \textcolor{blue}{$\hat{\boldsymbol{z}}_\theta^{t} \leftarrow D_\theta(\boldsymbol{z}_{t}',0)$}
        \State $r \sim \mathcal{U}(0,1)$
        \State $\boldsymbol{z}_\theta^{t} \leftarrow D_\theta(\boldsymbol{z}_{t}',\hat{\boldsymbol{z}}_\theta^{t})$ if $r<0.5$ else $\hat{\boldsymbol{z}}_\theta^{t}$
        \State $\mathcal{L}_{\text{total}}=||\boldsymbol{z}_\theta^{t}-\boldsymbol{z}_0||^2 - \log p_\theta(\boldsymbol{x}|\boldsymbol{z}_0)$ 
        \State $\theta\leftarrow \theta- \nabla_\theta \mathcal{L}_{\text{total}} $
        \EndWhile
    \end{algorithmic}
\end{algorithm}

\section{Theoretical Details}
\label{sec:appendix}

\subsection{Derivation Of The Posterior Distribution}
\label{subsec:posterior_dist_derv}
Since the forward process is a Markov chain, for $t>s$, we have  $q(\boldsymbol{z}_s,\boldsymbol{z}_t|\boldsymbol{z}_0)=q(\boldsymbol{z}_s|\boldsymbol{z}_0)q(\boldsymbol{z}_t|\boldsymbol{z}_s)$.
Following the Bayes rule, the posterior equals to $q(\boldsymbol{z}_t|\boldsymbol{z}_s)q(\boldsymbol{z}_s|\boldsymbol{z}_0)/q(\boldsymbol{z}_t|\boldsymbol{z}_0)$.
Given that every term is a Gaussian likelihood (Eq.~\ref{eq:forward_1} and ~\ref{eq:forward_2}), we plug this into the posterior, which yields:
\begin{align}
    q(\boldsymbol{z}_{s}|\boldsymbol{z}_t,\boldsymbol{z}_0)&= \mathcal{N}(\boldsymbol{z}_s;\tilde{\boldsymbol{\mu}}(\boldsymbol{z}_t,\boldsymbol{z}_0),\tilde{\sigma}_t^2\boldsymbol{I}) \\
    \text{where}\ \ \tilde{\boldsymbol{\mu}}(\boldsymbol{z}_t,\boldsymbol{z}_0) &=  \frac{\alpha_t}{\alpha_s}\frac{\sigma_s^2}{\sigma_t^2}\boldsymbol{z}_t+\alpha_s\frac{\sigma_{t|s}^2}{\sigma_{t}^2}\boldsymbol{z}_0 \\
    \text{and} \quad\quad\quad \ \ \
    \tilde{\sigma}_t &= \frac{\sigma_s}{\sigma_t}\sigma_{t|s}.
\end{align}

\subsection{Analysis On The Estimation Gap
}
\label{subsec:theorem_proof}
In contrast to prior studies, which suggested that error could accumulate across steps~\cite{DBLP:conf/icml/NingSPCC23}, we showed that, given a sufficiently large number of denoising steps, the discretization errors between consecutive steps become small enough for the model to estimate the correct self-condition.
Specifically, we first denote $t_{i+1}$ and $t_i$
as two selected consecutive steps used during generation, and $\boldsymbol{\bar{z}}_\theta^{t_{i}t_{i+1}}$ as the estimation with the condition is the denoising output from the previous step $t_{i+1}$. Then, we state the following theorem:
\begin{theorem}
    Let $t_0,t_1,...,t_n\in[\epsilon,1]$ such that $t_0<t_1 <...<t_n=1$; $\Delta t:=\max_{i\in[1,n-1]}\{|t_{i+1}-t_i|\}$. 
    Assume $D_\theta$ satisfies the Lipschitz condition: there exists $K>0$ such that for all $t\in[\epsilon,1]$, $\boldsymbol{x}$, and $\boldsymbol{y}$, we have $||D_\theta(\boldsymbol{z}_t,\boldsymbol{x})-D_\theta(\boldsymbol{z}_t,\boldsymbol{y})||_2\leq K||\boldsymbol{x}-\boldsymbol{y}||_2$.
    Assume further that for all $i\in[0,n-1]$, the denoising estimation at $t_{i+1}$ has local error uniformly bounded by $\mathcal{O}((t_{i+1}-t_i)^{p+1})$ with $p\geq1$. Then, the supremum of local error expectation:
    \begin{align}
        \sup &\underset{i\sim[0,n-1]}{\mathbb{E}}\left[\|D_\theta(\boldsymbol{z}_{t_i}, \boldsymbol{\bar{z}}_\theta^{t_{i+1}t_{i+2}})-D_\theta(\boldsymbol{z}_{t_i}, \boldsymbol{\hat{z}}_\theta^{t_i})\|\right] \nonumber\\& = \mathcal{O}((\Delta t)^p).
    \label{eq:multistep_bound}
    \end{align}
    \label{theorem:bound}
\end{theorem}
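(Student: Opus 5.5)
The plan is to reduce the quantity in Eq.~\ref{eq:multistep_bound} to a bound on the discrepancy between the two self-conditioning signals, using the Lipschitz assumption on $D_\theta$ in its second argument. Fix $i\in[0,n-1]$. Since both terms share the same first argument $\boldsymbol{z}_{t_i}$, the Lipschitz condition gives
\begin{align}
\|D_\theta(\boldsymbol{z}_{t_i}, \boldsymbol{\bar{z}}_\theta^{t_{i+1}t_{i+2}})-D_\theta(\boldsymbol{z}_{t_i}, \boldsymbol{\hat{z}}_\theta^{t_i})\|
\leq K\,\|\boldsymbol{\bar{z}}_\theta^{t_{i+1}t_{i+2}}-\boldsymbol{\hat{z}}_\theta^{t_i}\|.
\end{align}
The whole problem therefore becomes showing that the reused estimate from step $t_{i+1}$ differs from the step-matched estimate at $t_i$ by $\mathcal{O}((t_{i+1}-t_i)^{p})$, uniformly in $i$.

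For that discrepancy I will insert the target $\boldsymbol{z}_0$-estimate along the trajectory and use the triangle inequality. Concretely, I write $\boldsymbol{\bar{z}}_\theta^{t_{i+1}t_{i+2}}-\boldsymbol{\hat{z}}_\theta^{t_i}$ as the difference between the estimate produced at $t_{i+1}$ and the ideal estimate at $t_{i+1}$, plus the change of the ideal estimate between $t_{i+1}$ and $t_i$, plus the difference between the ideal estimate at $t_i$ and $\boldsymbol{\hat{z}}_\theta^{t_i}$. The first and third pieces are controlled by the local-error hypothesis, each being $\mathcal{O}((t_{i+1}-t_i)^{p+1})$. For the middle piece, the interpretation I will adopt is that the local-error hypothesis is a one-step consistency statement: the estimate at $t_{i+1}$, propagated over an interval of length $h_i:=t_{i+1}-t_i$, reproduces the estimate at $t_i$ up to $\mathcal{O}(h_i^{p+1})$. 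I also need the ideal denoising map to be Lipschitz (or $C^1$) in $t$ on $[\epsilon,1]$. Restricting to $t\geq\epsilon$ keeps $\alpha_t,\sigma_t$ bounded away from degeneracy, so the posterior coefficients in Appx.~\ref{subsec:posterior_dist_derv} have bounded derivatives. Under these assumptions the middle piece is $\mathcal{O}(h_i)$ at worst. To reach the claimed order I will follow the standard ODE-solver argument: local error $\mathcal{O}(h^{p+1})$ per step yields a per-unit-step discrepancy of order $\mathcal{O}(h^{p})$, obtained by dividing by the step length when converting the local truncation error into a rate.

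Finally, I take the expectation over $i$ (and over the sampling noise) and then the supremum. Because every constant above is uniform in $i$ by hypothesis, and $h_i\leq\Delta t$, the bound becomes $K\cdot C\,(\Delta t)^p$, which proves Eq.~\ref{eq:multistep_bound}. I expect the main obstacle to be the middle step: making precise how a local error of order $p+1$ at $t_{i+1}$ translates into an $\mathcal{O}(h^p)$ bound on the change of the self-condition between consecutive grid points. The danger is that the naive estimate only gives $\mathcal{O}(h)$. My first attempt will be to phrase the reused estimate as a one-step numerical method of order $p$ for the probability-flow ODE and apply the classical local-to-global truncation-error conversion, with the Lipschitz constant $K$ absorbing how errors propagate through the conditioning argument.
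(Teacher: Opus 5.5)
Your Lipschitz reduction matches the paper's first step. The bound you then attempt on $\|\boldsymbol{\bar{z}}_\theta^{t_{i+1}t_{i+2}}-\boldsymbol{\hat{z}}_\theta^{t_i}\|$ has a genuine gap, exactly at the point you flagged.

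Routing the triangle inequality through an ideal $\boldsymbol{z}_0$-estimate creates a middle term that is not an error at all. It is the increment, over an interval of length $h_i=t_{i+1}-t_i$, of a function of $t$ that is generally not constant. Under your Lipschitz-in-time assumption it is $\mathcal{O}(h_i)$, and in general no better.

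Your one-step-consistency reading of the hypothesis does not remove this term. The reused estimate is fed into $D_\theta$ unpropagated, and the ``propagation'' over $[t_i,t_{i+1}]$ is precisely this first-order increment.

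Nor does the local-to-global conversion rescue it. That argument says that accumulating roughly $1/h$ local errors of size $h^{p+1}$ gives a global error of size $h^{p}$. Dividing a local error by the step length only produces a consistency rate, not a bound on a single increment of a smooth function. As written, your argument therefore proves $\mathcal{O}(\Delta t)$, which is the claim only when $p=1$.

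The missing idea is that the paper applies the local-error hypothesis directly to the self-conditioning discrepancy. It takes the local error of the estimation at $t_{i+1}$ to be $\|\boldsymbol{\bar{z}}_\theta^{t_{i+1}t_{i+2}}-\boldsymbol{\hat{z}}_\theta^{t_i}\|$ itself, bounded by $C h_i^{p+1}$ uniformly in $i$. No ideal trajectory and no first-order increment ever appear.

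The drop from order $p+1$ to order $p$ then comes from summation over the grid, which is where the local-to-global mechanism actually lives. With $i$ uniform on $\{0,\dots,n-1\}$ and $h_i\le\Delta t$,
\[
K\,\mathbb{E}_i\|\boldsymbol{\bar{z}}_\theta^{t_{i+1}t_{i+2}}-\boldsymbol{\hat{z}}_\theta^{t_i}\| \le \frac{KC}{n}\sum_{i=0}^{n-1} h_i\,h_i^{p} \le KC\,(\Delta t)^p\sum_{i=0}^{n-1} h_i = KC\,(\Delta t)^p\,(t_n-t_0) \le KC\,(1-\epsilon)\,(\Delta t)^p .
\]

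Your observation that the naive estimate only gives $\mathcal{O}(h)$ is accurate and worth keeping. From more primitive assumptions, such as an ideal denoiser that is merely smooth in $t$, the reused-versus-step-matched discrepancy is genuinely first order. So the order-$p$ conclusion cannot be derived that way; it rests entirely on the hypothesis bounding that discrepancy directly.
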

\textit{Proof.} Because $D_\theta(\boldsymbol{z}_{t_i},\cdot)$ is $K$-Lipschitz, we have
\begin{align*}
    &\underset{i\sim[0,n-1]}{\mathbb{E}}\|D_\theta(\boldsymbol{z}_{t_i}, \boldsymbol{\bar{z}}_\theta^{t_{i+1}t_{i+2}})-D_\theta(\boldsymbol{z}_{t_i}, \boldsymbol{\hat{z}}_\theta^{t_i})\| \\
    &\leq K\underset{i\sim[0,n-1]}{\mathbb{E}}\|\boldsymbol{\bar{z}}_\theta^{t_{i+1}t_{i+2}}-\boldsymbol{\hat{z}}_\theta^{t_i}\|
\end{align*}
Furthermore, from our assumption that the local error is bounded by $\mathcal{O}((t_{i+1}-t_i)^{p+1})$, we have the upper bound of the total local error is

\begin{align*}
    &K  \underset{i\sim[0,n-1]}{\mathbb{E}}\|\boldsymbol{\bar{z}}_\theta^{t_{i+1}t_{i+2}}-\boldsymbol{\hat{z}}_\theta^{t_i}\| \\ &\overset{(i)}{\leq} \frac{K}{n}\cdot \sum_{i=0}^{n-1}\mathcal{O}((t_{i+1}-t_i)^{p+1}) \\
    &\leq \sum_{i=0}^{n-1} \mathcal{O}((t_{i+1}-t_i)^{p+1}) \\
    &= \sum_{i=0}^{n-1} (t_{i+1}-t_i)\mathcal{O}((t_{i+1}-t_i)^{p}) \\
    &\leq \mathcal{O}((\Delta t)^p) \sum_{i=0}^{n-1} (t_{i+1}-t_i) \\
    &= \mathcal{O}((\Delta t)^p) (t_n-t_0) \\
    &\leq \mathcal{O}((\Delta t)^p) (1-\epsilon) \\
    &\leq \mathcal{O}((\Delta t)^p)
\end{align*}
where $(i)$ holds due to the uniform sampling distribution of $t$.
Our proof builds on the error bounds for the ODE Solver in Consistency Models~\cite{DBLP:conf/icml/SongD0S23}, but it is different since we provide the total local error bounds rather than targeting the empirical approximation bounds.

This theorem suggests that the self-condition estimation can become arbitrarily accurate, as long as the number of sampling steps is large enough.
In Tab.~\ref{tab:est_err}, we practically demonstrate this estimation error through different numbers of denoising steps.
\begin{table}[h!]
\centering
\small
\setlength{\tabcolsep}{5.7pt} 
\begin{tabular}{lcccc}
\toprule
\textbf{NFEs} & 5 & 20 & 50 & 100  \\
\midrule
Sup $\mathbb{E}$ & 0.047 & 0.008 & 0.009 & 0.010  \\
\bottomrule
\end{tabular}
\caption{Estimation Error.}
\label{tab:est_err}
\end{table}

\section{Self-conditioning Error Formulation}
\label{sec:sc_gauss}
This section provides empirical and analytical support for \scheduler. We show that the perturbed forward sample $\boldsymbol{z}'_t$ used in training induces self-conditioning statistics that closely match those observed at inference, and we motivate the linear parameterization of $\lambda_t$ and $\gamma_t$.

\begin{figure}[t!]
    \centering
    \begin{subfigure}{0.8\linewidth}
        \centering
        \includegraphics[width=\linewidth]{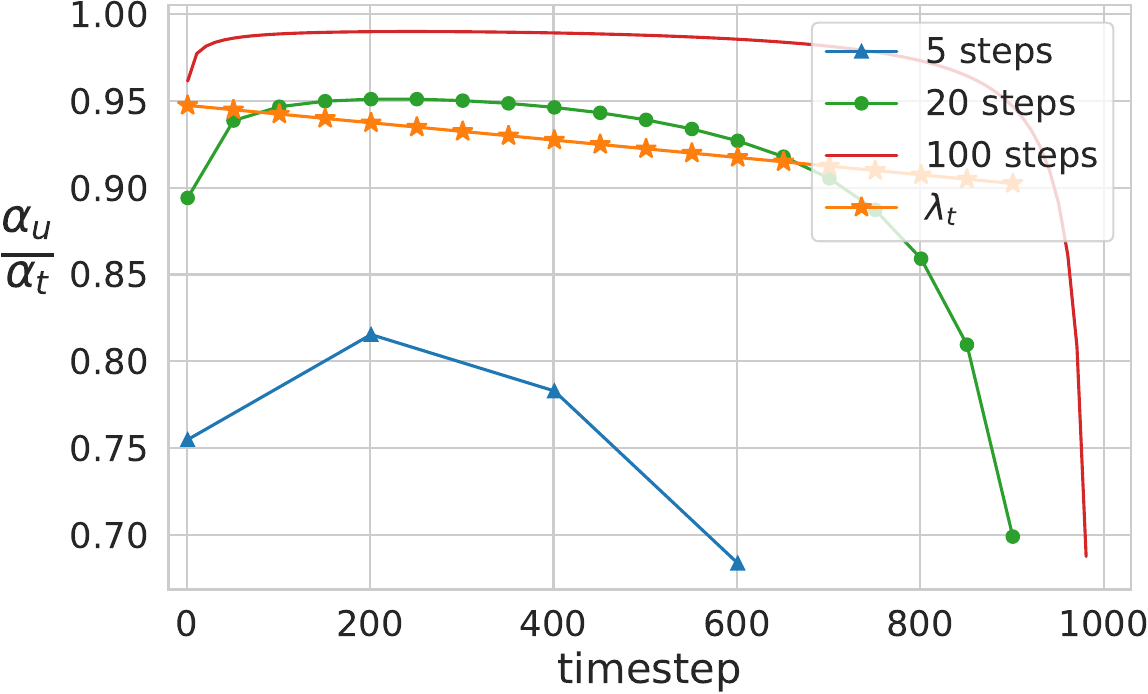}
    \end{subfigure}
    \vfill
    \begin{subfigure}{0.8\linewidth} 
    \centering
    \begin{overpic}[width=\linewidth]{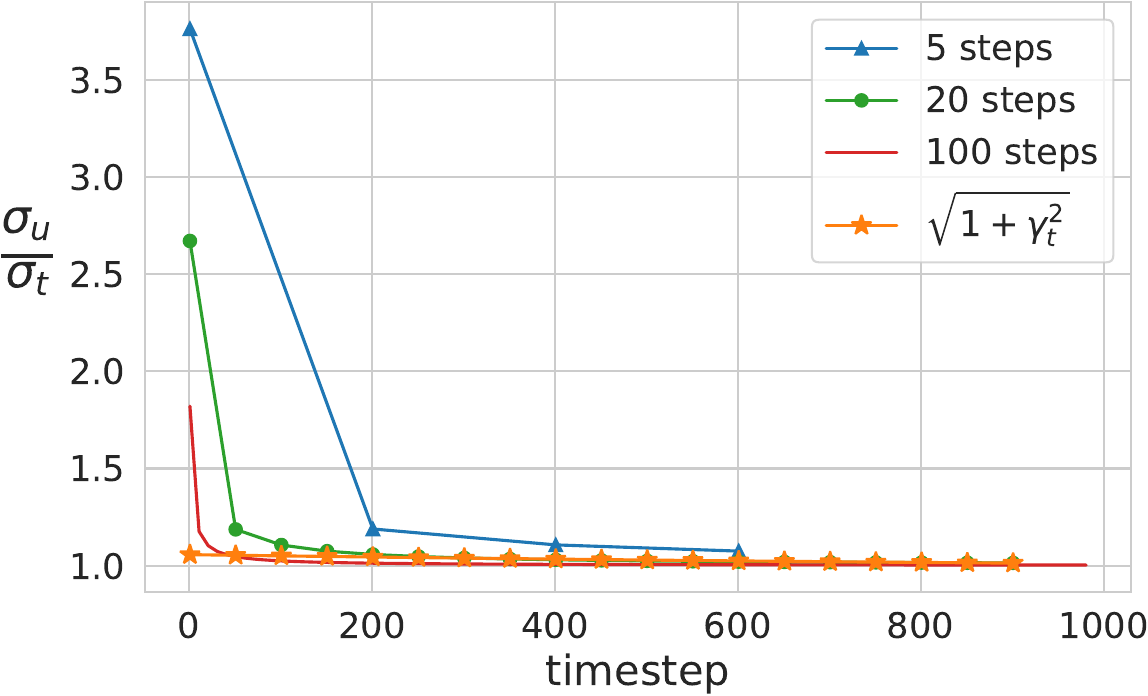}
        \put(39,23){\includegraphics[width=0.3\linewidth]{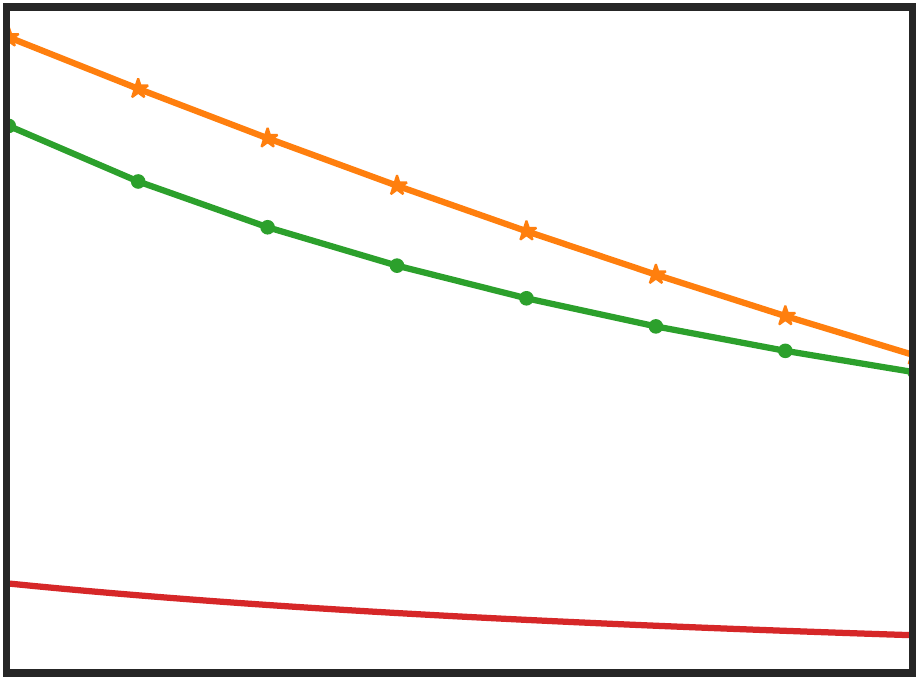}}
        \begin{tikzpicture}[remember picture, overlay]
            \draw[thin] (3.525,0.752) -- (2.42,1.41); 
            \draw[thin] (4.93,0.752) -- (4.237,1.412); 
        \end{tikzpicture}

        \put(29,25){\scriptsize 1.006}
        \put(29,39){\scriptsize 1.031}

    \end{overpic}
    \label{subfig:var_sched}
    \end{subfigure}
    \caption{Mean (top) and variance (bottom) ratio differences measured at different denoising steps. 
    We manually select hyperparameters of the scaling terms to fit 20-step sampling.
    }
    \label{fig:sched_ratio}
\end{figure}

\subsection{Approximate Gaussian Distribution Between Consecutive Estimates}
Following Sec.~\ref{subsec:selfcond_limit}, we train a standard continuous diffusion language model on \textbf{IWSLT14} De-En and run the original $20$-step denoising procedure on the validation set. 
At each step, we store the previous-step reused estimate $\bar{\boldsymbol{z}}_\theta^{tu}$ and compute the corrected self-conditioning estimate for the current step, denoted $\hat{\boldsymbol{z}}_\theta^s$.

Empirically, we observe that $\bar{\boldsymbol{z}}_\theta^{tu}$ is well-approximated by a Gaussian perturbation around $\hat{\boldsymbol{z}}_\theta^s$:
\begin{align}
    \bar{\boldsymbol{z}}_\theta^{tu} \sim \mathcal{N}(\hat{\boldsymbol{\mu}}_{st}\hat{\boldsymbol{z}}_\theta^s,\hat{\boldsymbol{\sigma}}_{st}^2 \boldsymbol{I}), \label{eq:emp_gap_dist}
\end{align}
where $\hat{\boldsymbol{\mu}}_{st}$ and $\hat{\boldsymbol{\sigma}}_{st}$ are dimension-wise mean and standard deviation vectors (details in Appx.~\ref{subsec:exp_err_gauss}).
This relation makes the inference-time self-conditioning mismatch explicit and allows us to connect it to the perturbed forward construction in Eq.~\ref{eq:perturb_forward}.

Assuming that $\hat{\boldsymbol{z}}_\theta^s$ perfectly denoises $\bar{\boldsymbol{z}}_s$ at step $s$, we start from the standard forward parameterization,
\begin{align}
    \bar{\boldsymbol{z}}_s = \alpha_s \hat{\boldsymbol{z}}_\theta^s + \sigma_s \boldsymbol{\epsilon}_s,
\end{align}
and substitute $\hat{\boldsymbol{z}}_\theta^s$ using Eq.~\ref{eq:emp_gap_dist}. Rearranging terms yields
\begin{align}
    \bar{\boldsymbol{z}}_s
    &= \alpha_s\frac{\bar{\boldsymbol{z}}_\theta^{tu}-\hat{\boldsymbol{\sigma}}_{st}\boldsymbol{\epsilon_u}}{\hat{\boldsymbol{\mu}}_{st}} + \sigma_{s} \boldsymbol{\epsilon}_s \nonumber \\
    &= \alpha_s\frac{1}{\hat{\boldsymbol{\mu}}_{st}}\bar{\boldsymbol{z}}_\theta^{tu}
       + \sigma_s\sqrt{1 + \left(\frac{\alpha_{s}}{\sigma_s}\frac{\hat{\boldsymbol{\sigma}}_{st}}{\hat{\boldsymbol{\mu}}_{st}}\right)^2} \boldsymbol{\epsilon} \nonumber \\
    &= \alpha_s \hat{\boldsymbol{\lambda}}_{st}\bar{\boldsymbol{z}}_\theta^{tu} + \sigma_s \sqrt{1 + \hat{\boldsymbol{\gamma}}_{st}^2} \boldsymbol{\epsilon},
    \label{eq:leakage}
\end{align}
where $\hat{\boldsymbol{\lambda}}_{st}=1/\hat{\boldsymbol{\mu}}_{st}$, $\hat{\boldsymbol{\gamma}}_{st}=(\alpha_s/\sigma_s)(\hat{\boldsymbol{\sigma}}_{st}/\hat{\boldsymbol{\mu}}_{st})$, and $\boldsymbol{\epsilon}\sim \mathcal{N}(0,\mathbf{I})$ is obtained via reparameterization of the combined noise terms. Since $\hat{\boldsymbol{\mu}}_{st}$ and $\hat{\boldsymbol{\sigma}}_{st}$ vary independently across dimensions, all operations are element-wise.

Eq.~\ref{eq:leakage} mirrors the structure of our perturbed forward process (Eq.~\ref{eq:perturb_forward}): both corrupt the signal term and inflate the effective noise through a multiplicative factor. Consistent with this connection, the empirically observed scaling patterns in Fig.~\ref{fig:sched_infer} match the behavior induced by \scheduler in Fig.~\ref{fig:sched_ratio}, supporting the view that \scheduler exposes the model during training to the noise pattern encountered when reusing self-conditioning at inference.

\begin{figure}[t!]
    \centering
    \begin{subfigure}{0.8\linewidth}
        \centering
        \includegraphics[width=\linewidth]{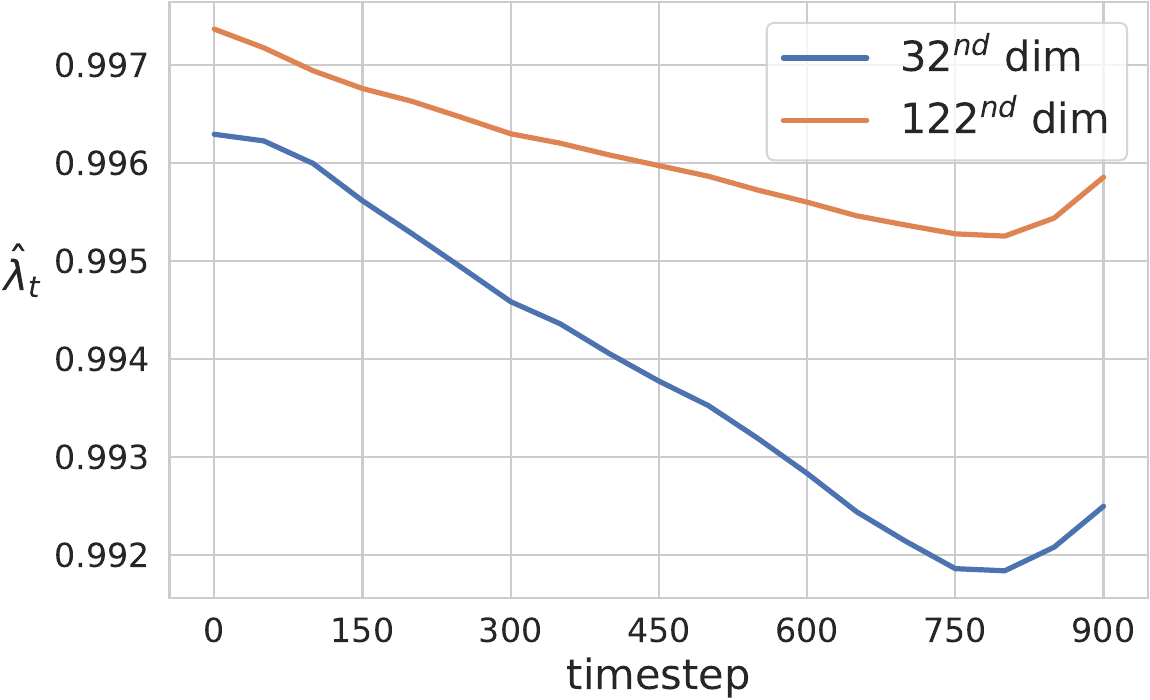}
    \end{subfigure}
    \vfill
    \begin{subfigure}{0.8\linewidth} 
        \centering
        \includegraphics[width=\linewidth]{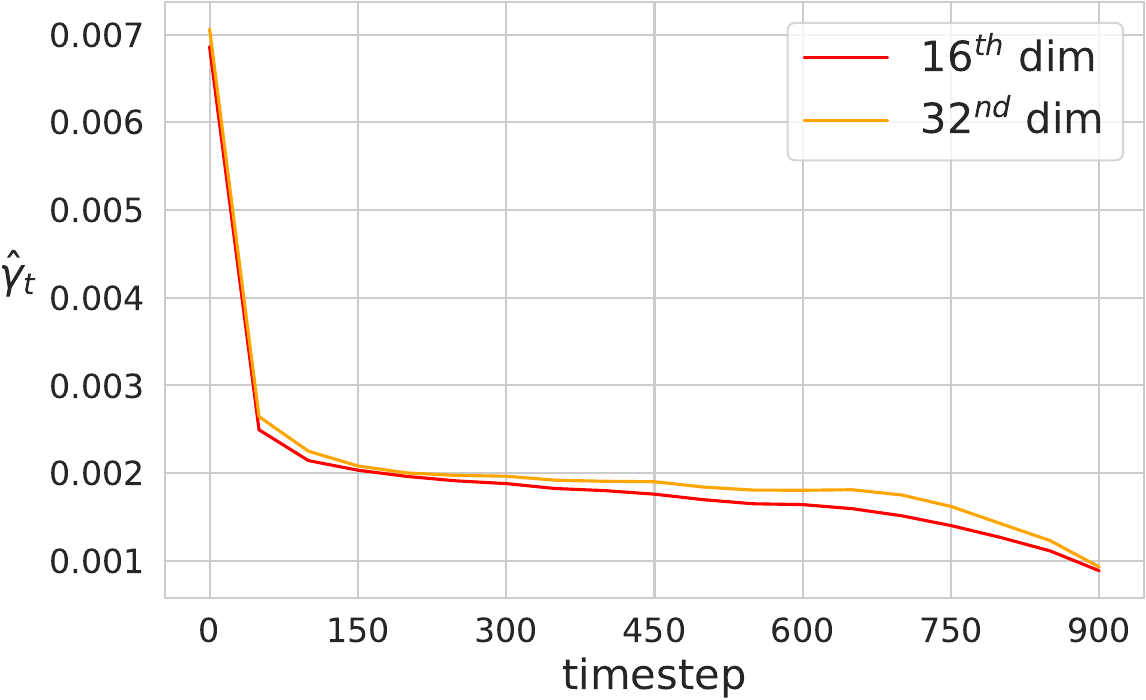}
    \end{subfigure}
    \caption{Inference mean $\hat{\lambda}_t$ (top) and variance $\hat{\gamma}_t$ (bottom) scaling factors of prediction mismatch in a pre-trained network, plotted across timestep $t$ for randomly selected embedding dimensions.}
    \label{fig:sched_infer}
\end{figure}

\begin{figure*}[htbp]
    \centering
    \begin{subfigure}{0.32\textwidth}
        \centering
        \includegraphics[width=\linewidth]{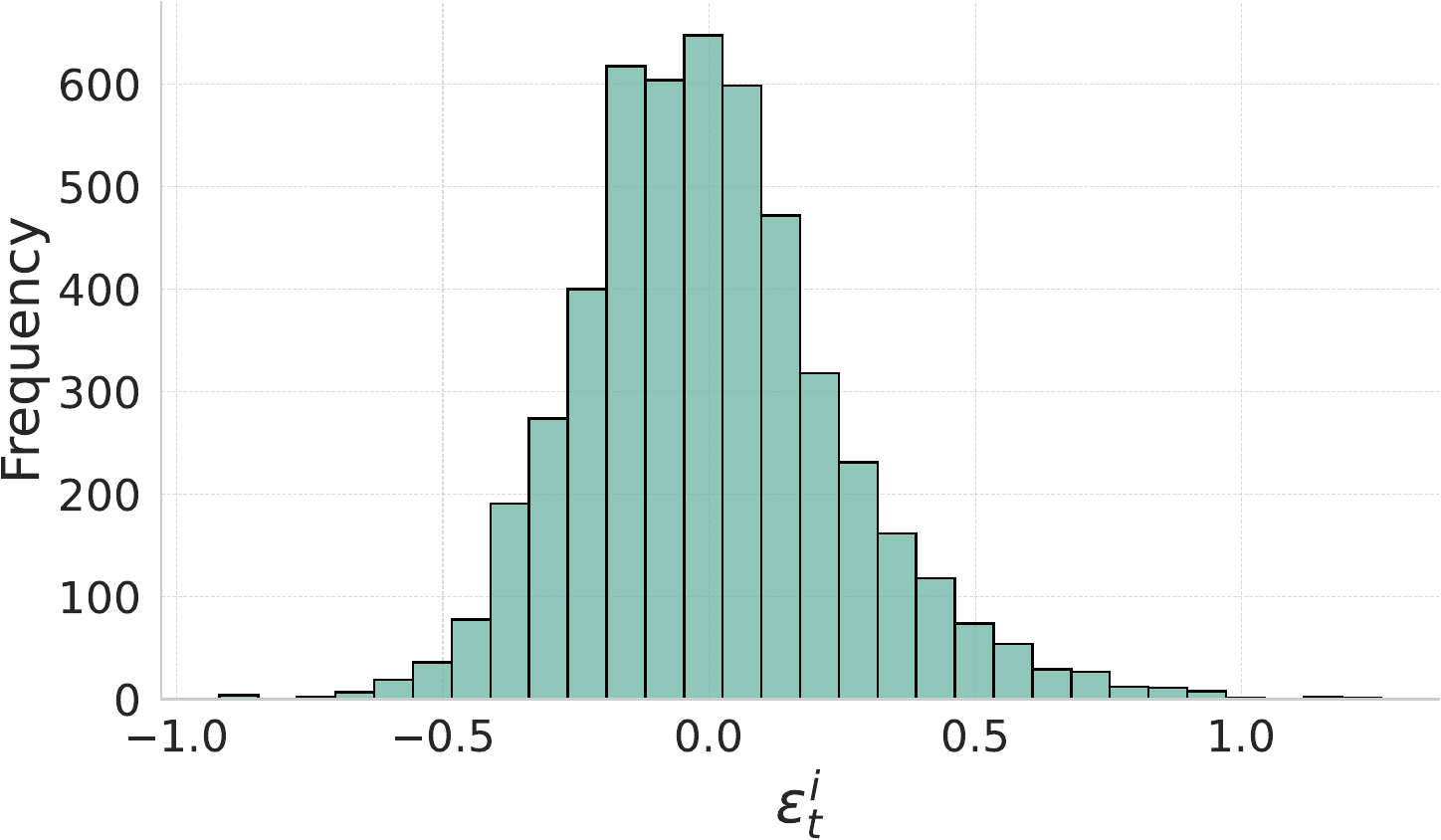}
    \end{subfigure}
    \hfill
    \begin{subfigure}{0.32\textwidth}
        \centering
        \includegraphics[width=\linewidth]{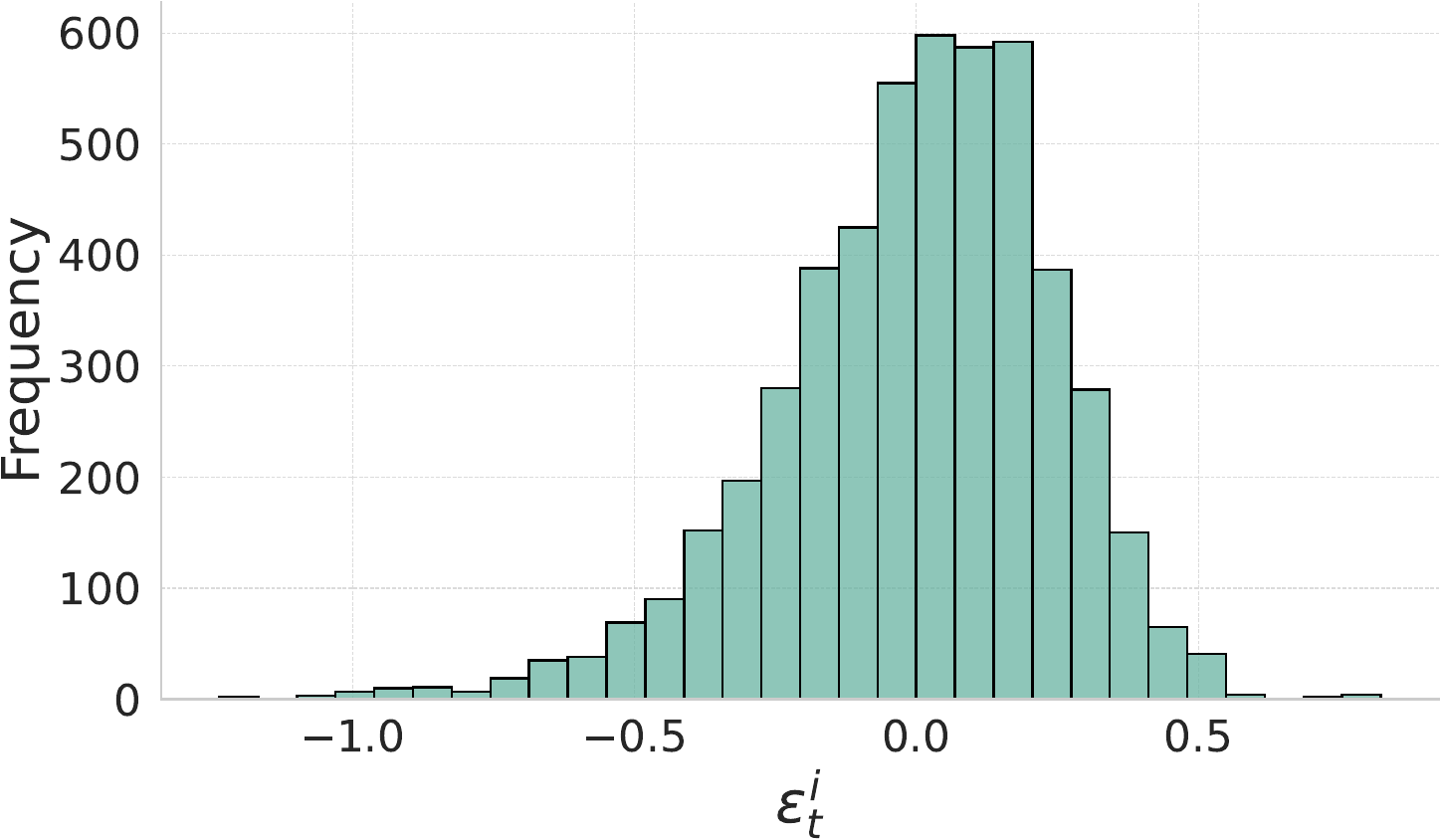}
    \end{subfigure}
    \hfill
    \begin{subfigure}{0.32\textwidth}
        \centering
        \includegraphics[width=\linewidth]{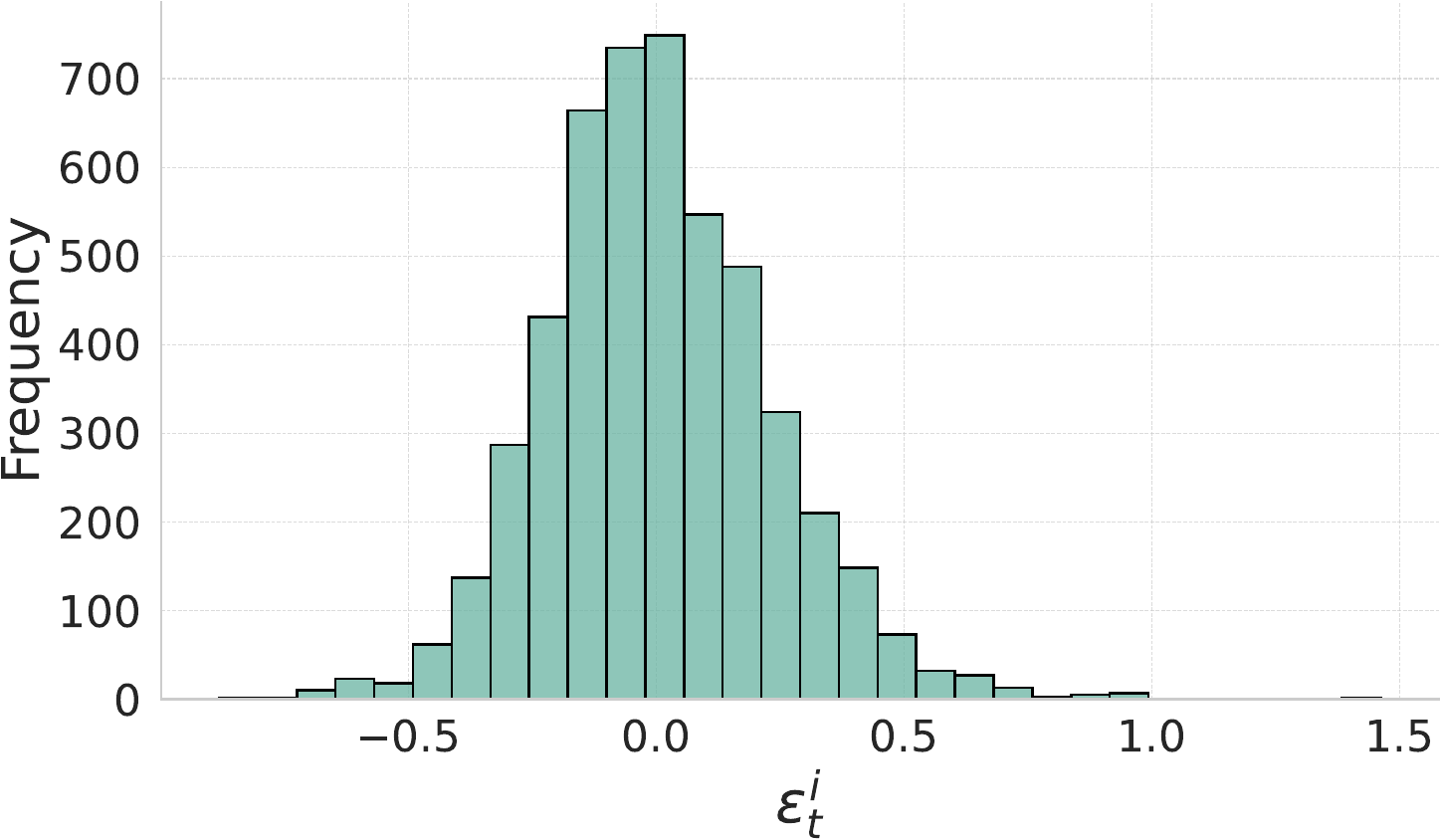}
    \end{subfigure}
    \vfill
    \begin{subfigure}{0.32\textwidth}
        \centering
        \includegraphics[width=\linewidth]{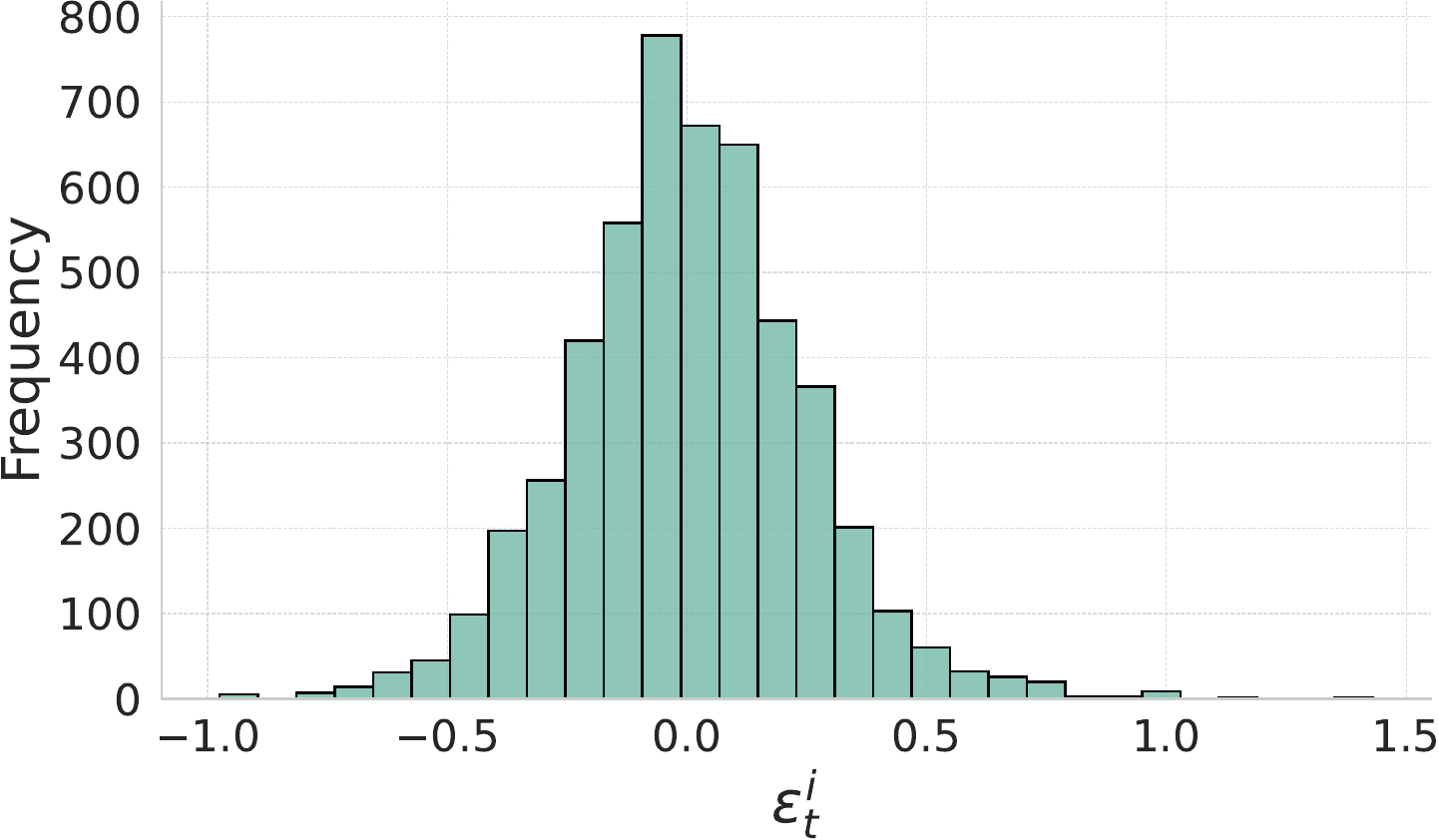}
    \end{subfigure}
    \hfill
    \begin{subfigure}{0.32\textwidth}
        \centering
        \includegraphics[width=\linewidth]{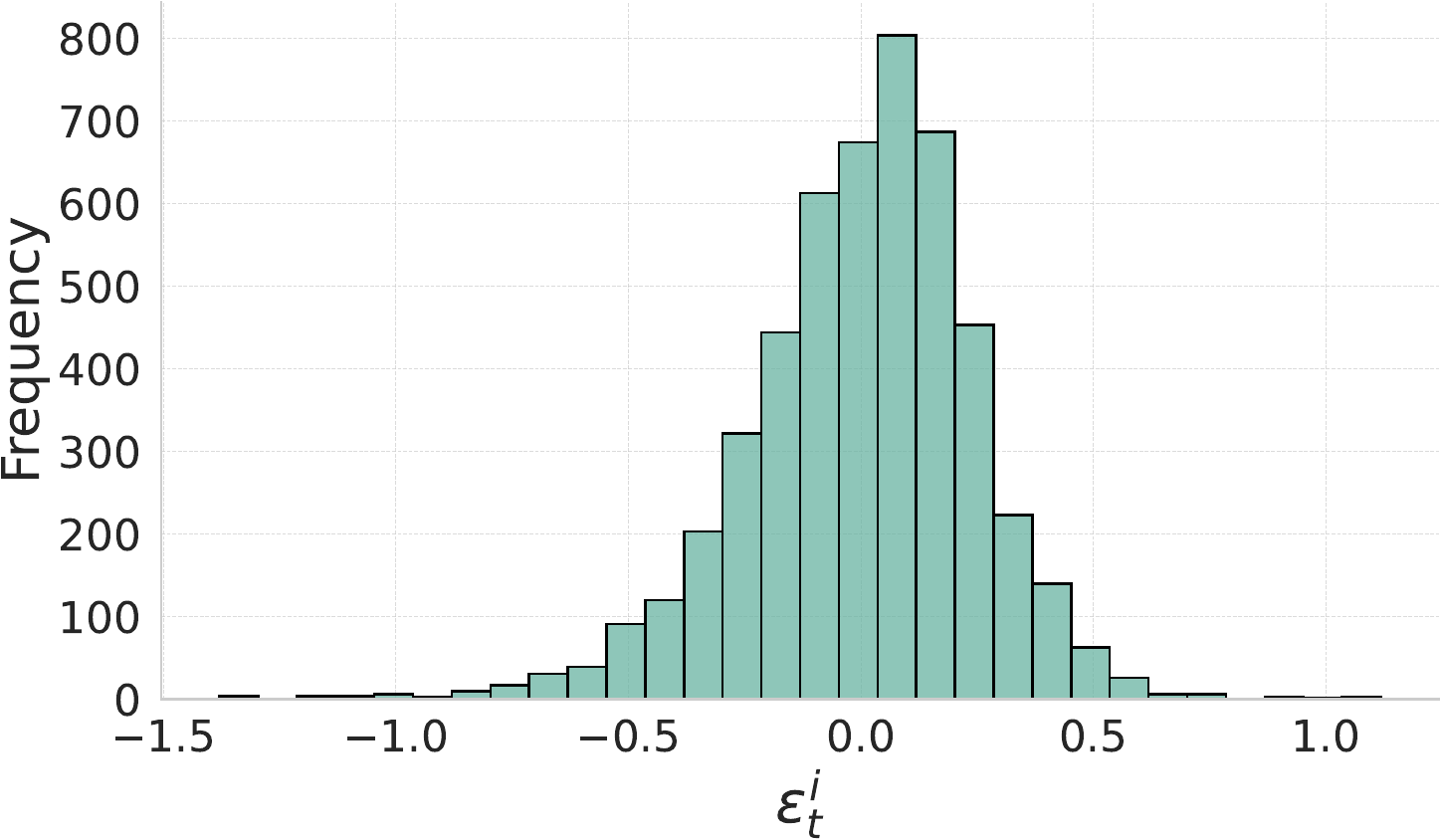}
    \end{subfigure}
    \hfill
    \begin{subfigure}{0.32\textwidth}
        \centering
        \includegraphics[width=\linewidth]{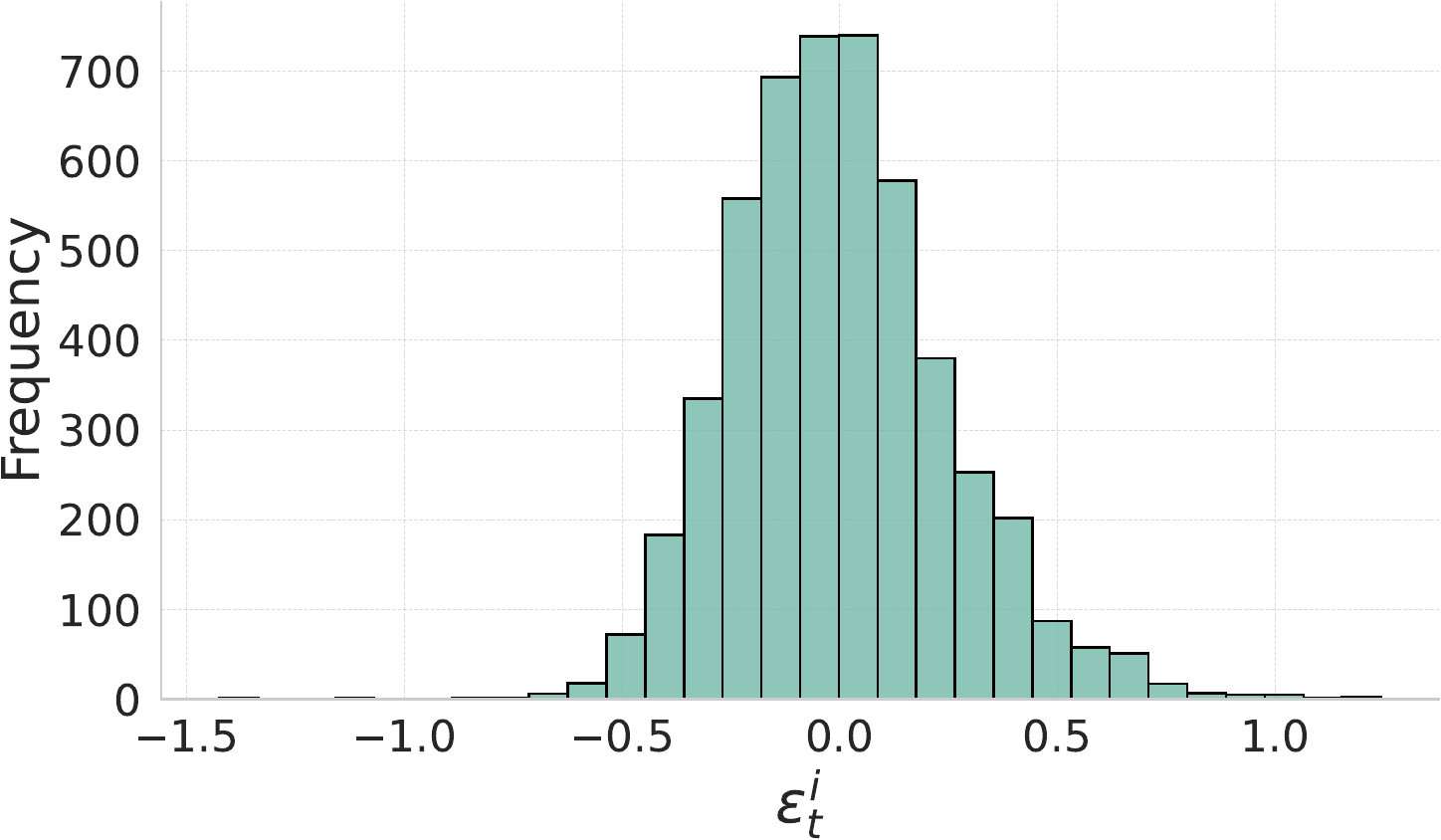}
    \end{subfigure}
    \vfill
    \begin{subfigure}{0.32\textwidth}
        \centering
        \includegraphics[width=\linewidth]{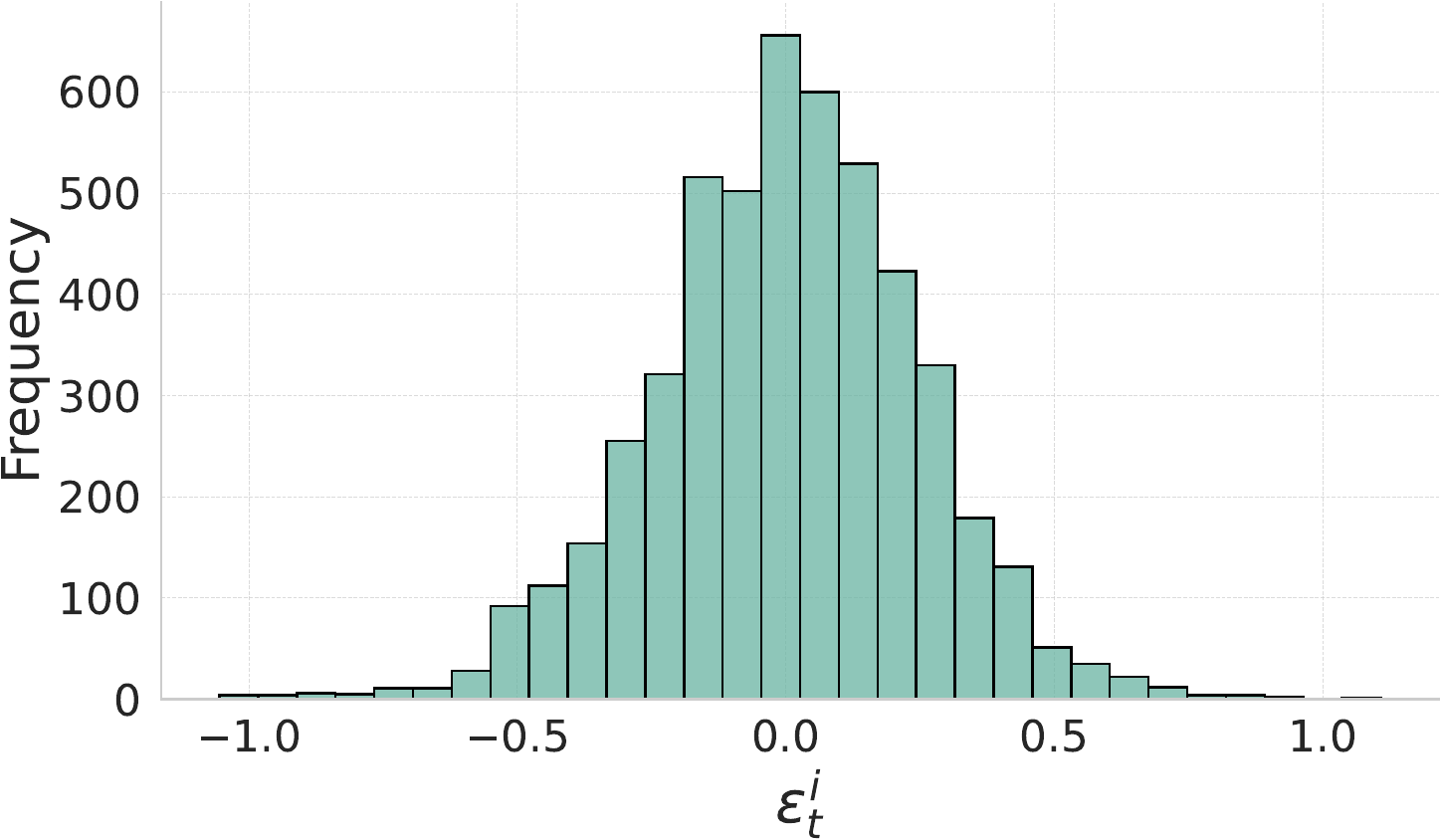}
    \end{subfigure}
    \hfill
    \begin{subfigure}{0.32\textwidth}
        \centering
        \includegraphics[width=\linewidth]{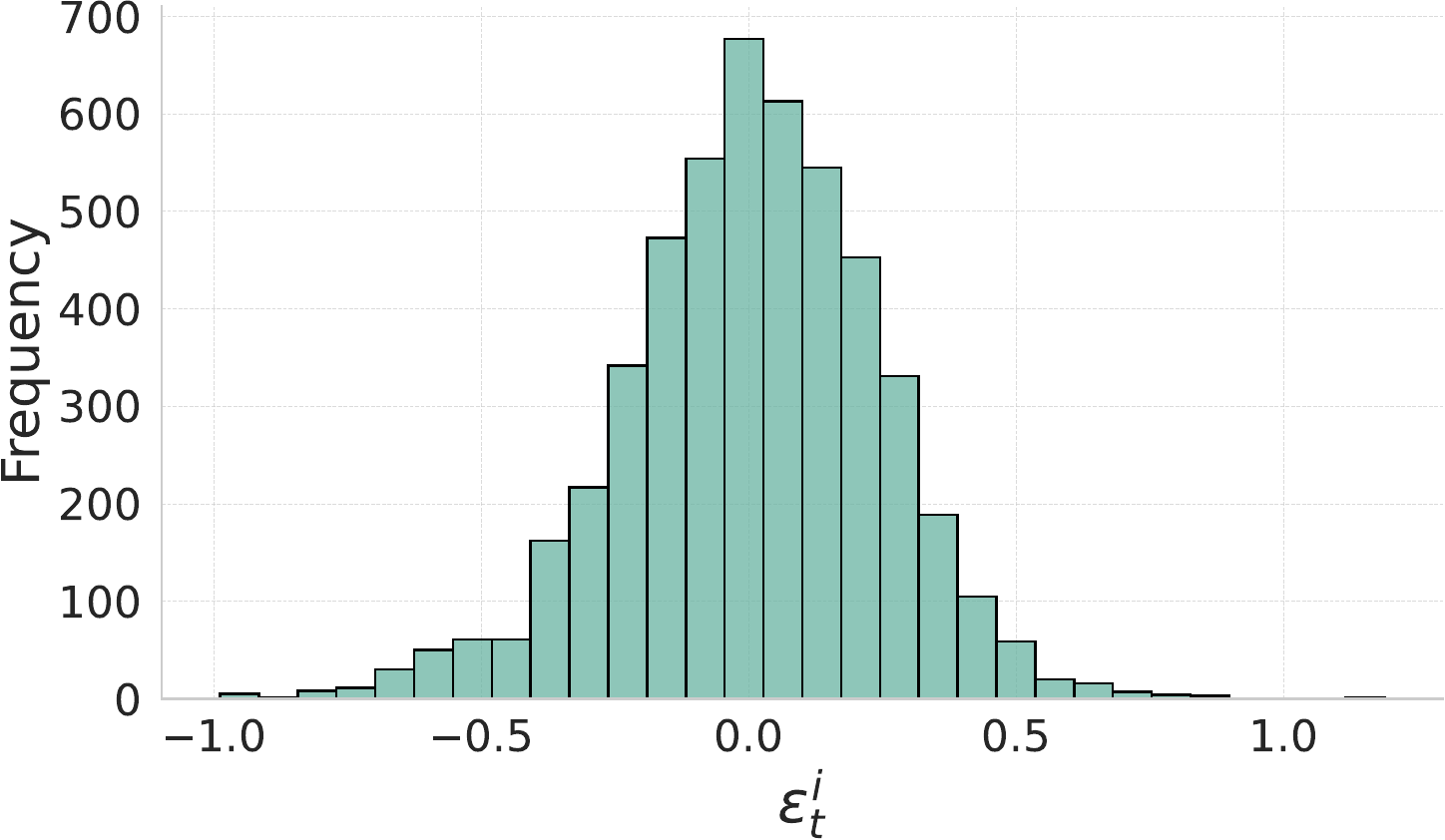}
    \end{subfigure}
    \hfill
    \begin{subfigure}{0.32\textwidth}
        \centering
        \includegraphics[width=\linewidth]{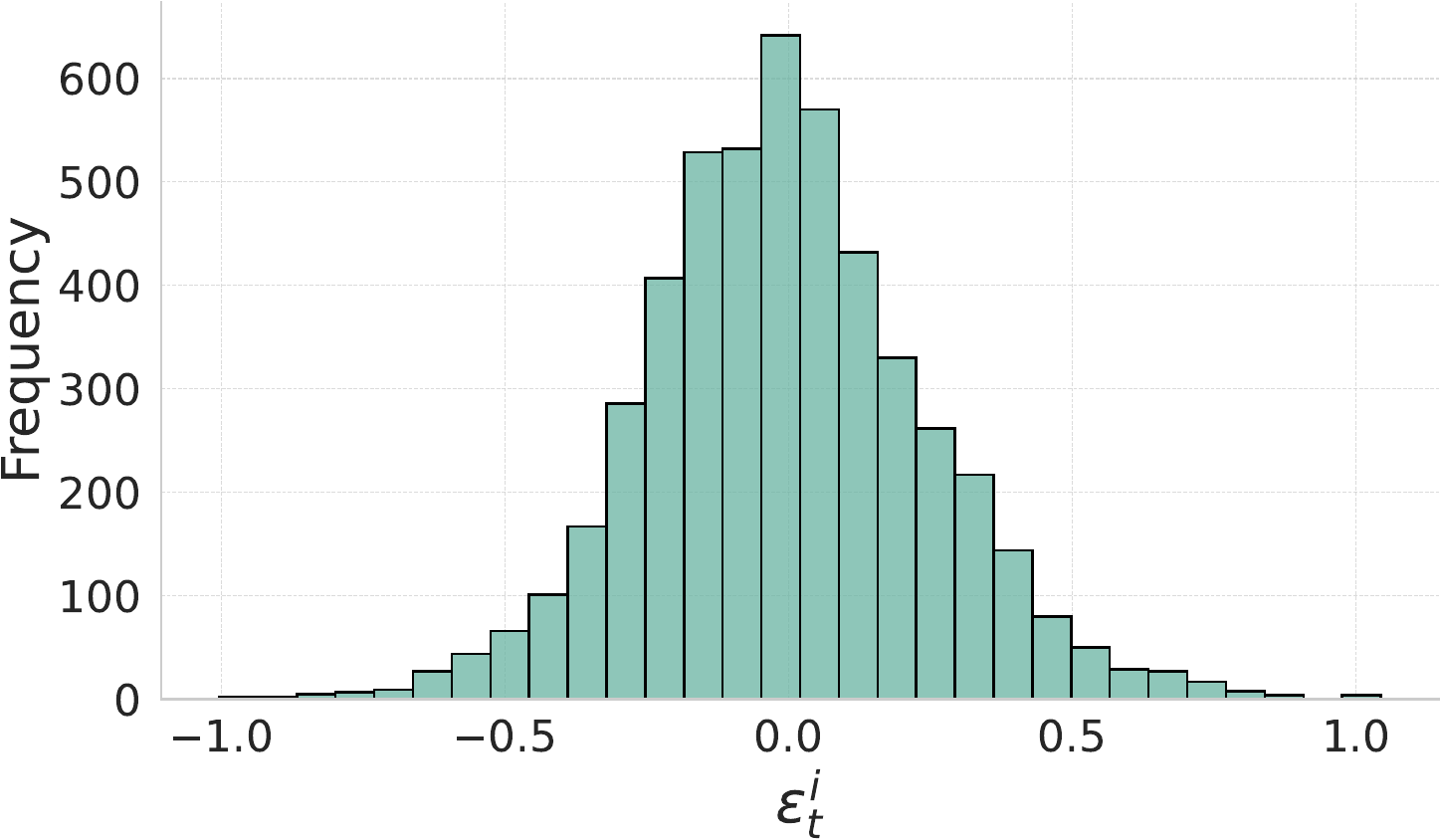}
    \end{subfigure}
    \caption{The empirical distribution of $\epsilon_t^i$ with different random values of $t$ and $i$.}
    \label{fig:empirical-distribution}
\end{figure*}

In principle, the empirical schedules $\{\hat{\boldsymbol{\lambda}}_{st},\hat{\boldsymbol{\gamma}}_{st}\}_{t=1}^T$ could be estimated directly from Eq.~\ref{eq:leakage}. 
However, these quantities depend on the discretization size and vary across dimensions, making them difficult to express with a single global function.
To avoid high-dimensional hyperparameter search, we instead use feature-independent anchor points $(\lambda_{\min}, \lambda_{\max})$ and $(\gamma_{\min}, \gamma_{\max})$ to define monotone linear schedules.
This yields a simple end-to-end procedure that preserves the intended dynamics without introducing additional preprocessing overhead.

\subsection{Estimate The Self-condition Gaussian Distribution Error}
\label{subsec:exp_err_gauss}

We now describe how we estimate the statistics in Eq.~\ref{eq:emp_gap_dist} on \textbf{IWSLT14} De-En.
For each diffusion time $t$ and each embedding dimension $i\in\{1,\ldots,H\}$, we consider the dimension-wise residual
\begin{align}
    \epsilon_t^i = \bar{z}_{\theta}^{tu,i} - \hat{\mu}_{t}^i \hat{z}_{\theta}^{s,i},
\end{align}
and test whether $\epsilon_t^i$ is approximately Gaussian with variance $(\hat{\sigma}_t^i)^2$.

We uniformly select 20 timesteps in $[\epsilon,1]$, run denoising at these timesteps, and record paired values of $\bar{\boldsymbol{z}}_\theta^{tu}$ and $\hat{\boldsymbol{z}}_\theta^s$.
For each selected $t$, we sample 1{,}000 sentences $z\in\mathcal{D}$, flatten all tokens, and estimate $\hat{\mu}_t^i$ and $\hat{\sigma}_t^i$ for each dimension.

Estimating $\hat{\mu}_t^i$ reduces to a one-dimensional linear regression (OLS) per dimension:
\begin{align}
    \hat{\mu}_t^i=\frac{\sum_{z\in\mathcal{D}} \bar{z}_u^i \hat{z}_t^i}{\sum_{z\in\mathcal{D}}(\hat{z}_t^i)^2},
\end{align}
and $\hat{\sigma}_t^i$ is computed as the standard deviation of the residuals:
\begin{align}
    \hat{\sigma}_t^i = \sqrt{\frac{1}{|\mathcal{D}|}\sum_{z\in \mathcal{D}}(\bar{z}_u^i - \hat{\mu}_t^i \hat{z}_t^i)^2 }.
\end{align}

We then standardize residuals and apply the Shapiro-Wilk test~\cite{shapiro1965analysis} to 50 randomly selected standardized residuals per dimension. Using a 95\% confidence level, we reject normality if $p<0.05$. The null hypothesis is rejected only in a small minority of cases, supporting our Gaussian approximation. Fig.~\ref{fig:empirical-distribution} shows histograms of $\epsilon_t^i$ across different dimensions.

\section{Comparison with Other Methods}
\label{sec:comparison}

\begin{table}[t]
    \centering
    \small
    \setlength{\tabcolsep}{2.2pt} 
    \begin{tabular}{l | c c | c c c}
        \toprule
        \multirow{2}{*}{\textbf{Method}} & \multirow{2}{*}{MBR} & \multirow{2}{*}{NFE} & \textit{Rouge-1} & \textit{Rouge-2} & \textit{Rouge-L} \\
        & & & $(\uparrow)$ & $(\uparrow)$ & $(\uparrow)$ \\
        \midrule
        \texttt{LSTM} & \multicolumn{1}{c}{5} & - & 34.2 & 16.0 & 31.8 \\
        \midrule
        \texttt{CMLM} & - & - & 34.4 & 15.6 & 32.2 \\
        \texttt{NAG-BERT} & - & - & \underline{35.1} & 16.5 & \textbf{33.3} \\
        \midrule 
        \texttt{Difformer}* & 10 & 20 & 34.9 & \underline{17.0} & 32.4 \\
        \texttt{DiffuSeq} & 10 & 2000 & 31.2 & 12.2 & 29.2 \\
        \texttt{SeqDiffuSeq} & 0 & 2000 & 31.9 & 12.4 & 29.2 \\
        \midrule
        \framework & 10 & 5 & 34.9 & 16.9 & 32.5 \\
        \framework & 10 & 20 & \textbf{35.3} & \textbf{17.3} & \underline{32.8} \\
        \bottomrule
    \end{tabular}
    \caption{Main results on \textbf{Gigaword}. 
    The best \texttt{NAR} results are \textbf{bold} and the second-best results are \underline{underlined}. Baseline results are from \texttt{Difformer}~\cite{DBLP:conf/naacl/GaoG0ZZ0X24}. $*$ indicates reproduced results.}
    \label{tab:gigaword_rouge}
\end{table}

\subsection{Training And Inference Mismatch}
\texttt{Distance Penalty}~\cite{DBLP:conf/acl/TangWZLCZ23} is proposed to perturb the forward process for train-test discrepancy reduction.
While conceptually related, it does not target the \emph{self-conditioning} mismatch that is central in our analysis. In addition, this strategy applies a fixed penalty across steps, which mirrors the fixed variants in Tab.~\ref{tab:abl_sched_var} and is empirically less effective than our time-varying perturbation.

\texttt{TREC} also discusses collapse phenomena under self-conditioning, however their explanation emphasizes shortcut behavior induced by a predicted prior rather than the discretization-amplified self-conditioning mismatch that we studied. Our approach directly regularizes the self-condition so that training better matches inference-time reuse errors under few-step updates.

\subsection{Modified Noise Scheduler}
\texttt{SeqDiffuSeq}~\cite{DBLP:conf/naacl/YuanYTHH24} assigns token difficulty by \emph{position}, yielding position-specific training trajectories. However, the far later position schedules become similar (e.g., their Fig.~2), suggesting diminishing
improvement as sequence length grows. In contrast, \noisescaling is length-agnostic: it adjusts noise based on model confidence at the token level, and therefore independent of the sequence length.

\texttt{DINOISER}~\cite{DBLP:journals/corr/abs-2302-10025} emphasizes high-intensity noise to strengthen learning, but high-noise training can introduce large variance and bias learning toward marginal distributions, eventually reducing sample diversity. 
Our \noisescaling instead increases noise \emph{selectively} for high-confidence tokens while leaving uncertain tokens unchanged, maintaining meaningful supervision.

\texttt{Meta-Diffu$B$}~\cite{DBLP:conf/nips/ChuangHLGLCL24} learns a planning schedule using an auxiliary controller (\texttt{LSTM}) optimized with reinforcement learning. This adds extra modules and increases training complexity, which can be unstable in practice, and its generalization to unseen contexts or long sequences depend largely on the planner choice. In contrast, \framework avoids training an auxiliary planner, and are simple yet effective.

\section{Additional Results}
\label{sec:add_result}
Tab.~\ref{tab:gigaword_rouge} shows the experimental results on Text Summarization benchmark. 
Consistent with previous results in Tabs.~\ref{tab:iwslt14_bleu} and~\ref{tab:qqp_and_wa_results}, we observe that our model outperforms most of the diffusion and non-autoregressive baselines on all metrics.

\subsection{SCP Sensitivity Analysis.}
We analyze SCP over a broad range of anchor points $(\lambda_{\min}, \lambda_{\max}, \gamma_{\min}, \gamma_{\max})$, as shown in Table~\ref{tab:scp_sensitivity}. Empirically, SCP consistently improves BLEU across all tested settings, indicating that the method is robust and not overly sensitive to the exact choice of anchor values.

\begin{table*}[t]
\centering
\setlength{\tabcolsep}{4pt}
\begin{tabular}{llcccccccccc}
\toprule
Param & Metric & Default & 1 & 2 & 3 & 4 & 5 & 6 & 7 & 8 & 9 \\
\midrule
\midrule
$\lambda_{\min}$ 
& Value & -- & 0.85 & 0.86 & 0.87 & 0.88 & 0.89 & 0.90 & 0.91 & 0.92 & 0.93  \\
& BLEU  & 26.94 & 28.67 & 28.58 & \textbf{28.73} & 28.65 & 28.68 & 28.67 & 28.34 & 28.11 & 28.44 \\
\midrule
$\lambda_{\max}$ 
& Value & -- & 0.91 & 0.92 & 0.93 & 0.94 & 0.95 & 0.96 & 0.97 & 0.98 & 0.99 \\
& BLEU  & 26.94 & 28.54 & 28.45 & \textbf{28.68} & 28.53 & 28.60 & 28.59 & 28.45 & 28.66 & 28.59 \\
\midrule
$\gamma_{\min}$ 
& Value & -- & 0.11 & 0.12 & 0.13 & 0.14 & 0.15 & 0.16 & 0.17 & 0.18 & 0.19 \\
& BLEU  & 26.94 & 28.60 & 28.37 & 28.00 & 28.46 & 28.53 & 28.48 & 28.40 & \textbf{28.79} & 28.52 \\
\midrule
$\gamma_{\max}$ 
& Value & -- & 0.31 & 0.32 & 0.33 & 0.34 & 0.35 & 0.36 & 0.37 & 0.38 & 0.39 \\
& BLEU  & 26.94 & 28.22 & 28.14 & 28.41 & \textbf{28.82} & 28.79 & 28.66 & 28.64 & 28.52 & 28.43 \\
\bottomrule
\end{tabular}%
\caption{BLEU on IWSLT14 baseline under different SCP schedule parameters. Default denotes $\lambda_t = 1$ and $\gamma_t = 0$.}
\label{tab:scp_sensitivity}
\end{table*}

Finally, the MDLM experiment is a direct application of the SCP parameter selection strategy. The results in Tab.~\ref{tab:mdlm_result} further confirm that SCP outperforms standard self-conditioning.

\section{Experimental Settings}
\label{sec:exp_conf}

\begin{table*}[t!]
\centering
\small
\setlength{\tabcolsep}{0.8pt}
\begin{tabular}{lccccccc}
\toprule
\textbf{Configurations} & \textbf{WMT14} & \textbf{WMT16} & \textbf{IWSLT14} & \textbf{Gigaword} & \textbf{QQP} & \textbf{Wiki-Auto} \\
\midrule
\textbf{Split} \\
Training    & 4,500,966 & 608,319   & 160,215 & 3,803,957 & 144,715 & 677,751 \\
Validation  & 3,000     & 1,999     & 7,282   & 189,651   & 2,048   & 2,048 \\
Test        & 3,003     & 1,999     & 6,750   & 1,951     & 2,500   & 5,000 \\
\midrule
\textbf{Preprocess} \\
BPE & 40,000 & 30,000 & 10,000 & 60,000 & 15,000 & 40,000 \\
Vocab & 40,624 & 34,976 & 15,480 & 56,392 & 15,136 & 45,376 \\
\midrule
\textbf{Architecture} \\
$d_{\text{model}}$ & 512 & 512 & 512 & 512 & 768 & 768 \\
$d_{\text{ffn}}$ & 2048 & 2048 & 1024 & 2048 & 3072 & 3072 \\
Heads & 8 & 8 & 4 & 8 & 12 & 12 \\
\midrule
\textbf{Training} \\
GPUs & 2 & 2 & 2 & 2 & 2 & 4 \\
Steps & 600K & 150K & 300K & 300K & 50K & 30K \\
Tokens/GPU & 32K & 32K & 4K & 32K & 4K & 8K \\
Phase & [100K,200K,600K] & [50K,100K,150K] & [100K,200K,300K] & [100K,200K,300K] & [10K,20K,30K] & [5K,10K,30K]  \\
Scaling & [2.0,3.0,4.0] & [2.0,3.0,4.0] & [2.0,3.0,4.0] & [2.0,3.0,4.0] & [2.0,3.0,4.0] & [2.0,4.0,8.0]   \\
\bottomrule
\end{tabular}
\caption{The dataset details, model architectures, and hyperparameters used in our experiments.}
\label{tab:hyperparams}
\end{table*}

\paragraph{Data.}
For preprocessing, we use \texttt{fairseq} library for \textbf{IWSLT14}, and use the preprocessed data released by Fully-NAT~\cite{DBLP:conf/acl/GuK21} for \textbf{WMT14} and \textbf{WMT16}\footnote{https://github.com/shawnkx/Fully-NAT}.
For \textbf{Wiki} and \textbf{QQP}, we use the ones from \texttt{DiffuSeq}\footnote{https://github.com/Shark-NLP/DiffuSeq}, and for \textbf{Gigaword} we use the \texttt{HuggingFace} version\footnote{https://huggingface.co/datasets/Harvard/gigaword}.
All datasets are tokenized with byte-pair encoding (BPE) and processed with \texttt{fairseq-preprocess}. BPE settings and vocabulary sizes are reported in Tab.~\ref{tab:hyperparams}.

\paragraph{Model.}
We use a \texttt{Transformer}-base backbone~\cite{DBLP:conf/nips/VaswaniSPUJGKP17} with 6 encoder and 6 decoder layers for all datasets.
The number of attention heads, hidden size, and related hyperparameters are listed in Tab.~\ref{tab:hyperparams}.
The diffusion embedding dimension is 128.
For \scheduler, we set anchor points tuned to $20$-step sampling, setting $(\gamma_{\min},\gamma_{\max})=(0.90,0.95)$ and $(\lambda_{\min},\lambda_{\max})=(0.15,0.35)$.

\paragraph{Training.}
All models are trained with \texttt{fp16} on 4 NVIDIA H100 GPUs. We use an \texttt{inverse-sqrt} learning-rate schedule with 10{,}000 warmup steps and ${lr}_{\max}=5\times10^{-4}$ for all benchmarks. We set gradient norm clipping to 1.0, dropout to 0.1, and label smoothing to 0.1.
Runtime is approximately 8.5 hours for \textbf{WMT} and \textbf{Gigaword}, and around 4 hours on average for the other datasets.

\paragraph{Inference.}
At inference, the reverse process follows Eq.~\ref{eq:posterior}. Self-conditioning reuses the previous-step estimate, as in prior work. We apply Minimum Bayes-Risk (MBR) decoding~\cite{DBLP:conf/naacl/KumarB04} following DiffusionLM and DiffuSeq.

\paragraph{\noisescaling.}
We implement \noisescaling with three training phases and increase the scaling factor $\beta(n)$ over time.
The phase interval and scaling factor are given in Tab.~\ref{tab:hyperparams}. 
For example, on \textbf{WMT14} we use $\beta(n)=2.0$ for $n<100\text{K}$, $\beta(n)=3.0$ for $100\text{K}\le n<200\text{K}$, and $\beta(n)=4.0$ thereafter. This modification increases total training time by less than $5\%$ in our experiments.

\section{On The Effectiveness Of Minimum Bayesian Risk Decoding}
\label{sec:mbr_abl}

Since \framework builds on \texttt{Difformer}, we evaluate MBR decoding with both length beam search and noise-candidate search to improve output quality while maintaining diversity. We note that this search overhead can often be reduced by alternative stochastic length strategies~\cite{DBLP:conf/iclr/GongLF0K23,DBLP:conf/nips/WuFLZGS0LWGDC23}.
Here we focus on characterizing the search-space trade-offs under the standard MBR setup.

As shown in Tab.~\ref{tab:iwslt14_bleu}, increasing either the length beam or the noise beam improves \textit{BLEU}.
Fig.~\ref{fig:mbr_var} further determine which axis scales better.
The results reveal that scaling the length beam (vertical axis) yields faster gains than scaling the noise beam (horizontal axis).
Intuitively, a larger length beam mitigates length prediction errors and expands the set of plausible candidates, which in turn improves the effectiveness of MBR selection.
Noise beam, otherwise is not as effective, since our method already ensures convergence for each length.

\begin{figure*}[t!]
    \centering
    \includegraphics[width=0.75\linewidth]{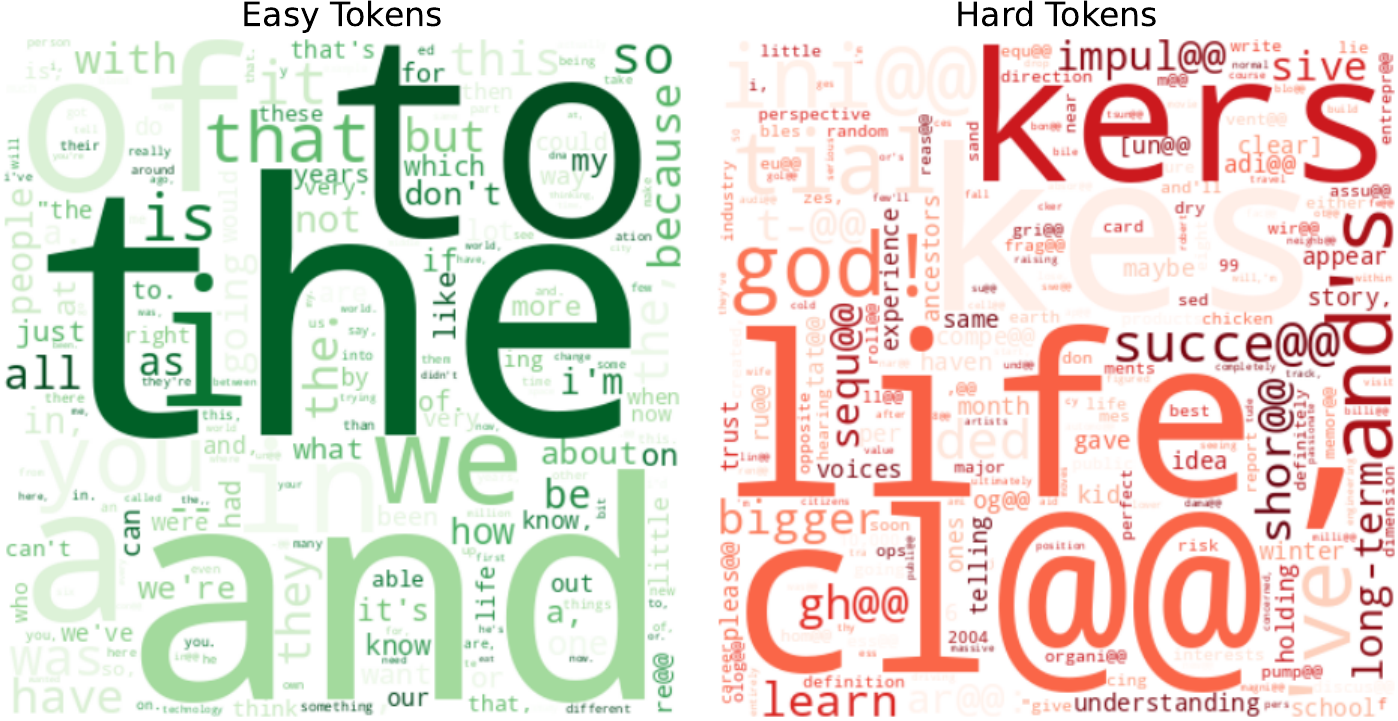}
    \caption{Word cloud of the easy and hard tokens during training.}
    \label{fig:word_cloud}
\end{figure*}

\begin{figure}
    \centering
    \includegraphics[width=0.9\linewidth]{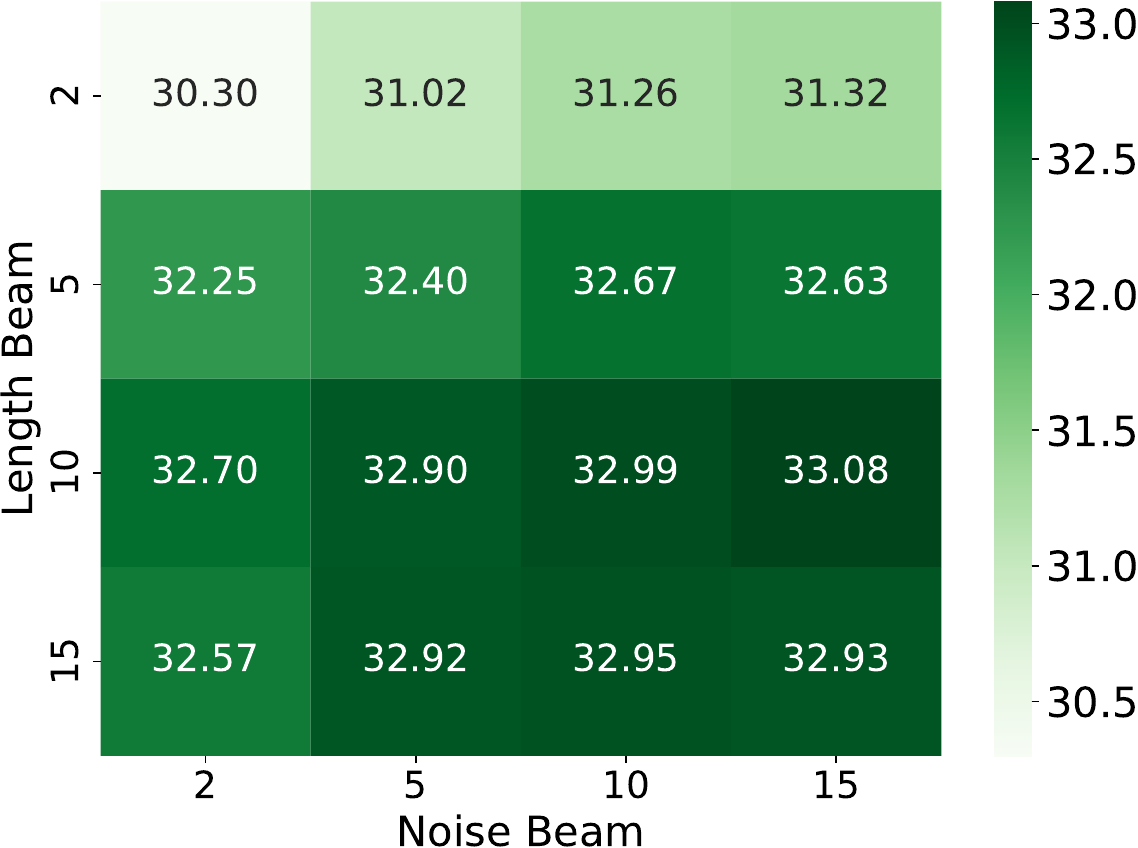}
    \caption{\textit{SacreBLEU} score on \textbf{IWSLT14} De-En with various length beams and noise beams.}
    \label{fig:mbr_var}
\end{figure}

\section{High And Low Confidence Tokens}
\label{sec:visualize}
Fig.~\ref{fig:word_cloud} visualizes tokens identified by \noisescaling as high or low confidence during training.
High-confidence tokens are predominantly common words, indicating that the model can already reconstruct them reliably. \noisescaling increases their effective noise level, forcing the model to denoise these ``easy'' tokens even under high noise, strengthening the self-conditioning signal.

In contrast, low-confidence tokens tend to be rarer or domain-specific words that appear less often in the training data. 
Their noise level is left unchanged, allowing the model to learn them under a less aggressive corruption regime. 
This behavior aligns with a coarse-to-fine denoising process: common words are forced to generate faster, while rare words use reliable generated words to refine prediction.

\section{Qualitative Results}
\label{sec:qualitative_res}

We present qualitative case studies in Tabs.~\ref{tab:example} and~\ref{tab:example_long} to illustrate how \framework changes generation dynamics at the instance level.
In Tab.~\ref{tab:example}, the first example shows that the baseline fails to effectively use the reused estimate, leading to persistent artifacts across steps (e.g., the repetition ``Summer Summer'' is not corrected in the next update). In contrast, \framework repairs the error in subsequent steps, consistent with \scheduler improving the robustness of the model to erroneous self-conditioning under few-step discretization.
The second example highlights that \framework produces a sharper, more accurate reconstruction target, aligning with the role of \noisescaling in strengthening supervision on high-confidence tokens.

Tabs.~\ref{tab:example_prefix},~\ref{tab:example_lexical}, and~\ref{tab:example_negative} further evaluate \scheduler adaptability to \texttt{Plaid}. Across these settings, \scheduler yields more coherent and consistent generations, reducing drift and improving paragraph-level continuity under limited denoising steps.

\begin{table*}[ht]
\small
\centering
\renewcommand{\arraystretch}{1.2}

\begin{tabular}{lcl}
\toprule
\midrule
& \textbf{Step} & \multicolumn{1}{c}{\textbf{Example 1}} \\
\midrule
\textbf{Source} & - & He also twice participated in the  Summer Olympics, starting in 1996.  \\
\textbf{Target} & - & He was also in the 1996 Summer Olympics. \\
\midrule
\multirow{2}{*}{\textbf{Baseline}} & 2 & He also twice in the Summer \textit{\textbf{Summer}} Olympics 1996 Summer 1996. \\
& 1 & He also twice in the Summer \textit{\textbf{Summer}} in the in 1996. \\
\midrule
\multirow{2}{*}{\textbf{\framework}} & 2 & He also twice twice in \textit{\textbf{in}} Summer \textit{\textbf{Summer}} Olympics, starting in 1996. \\
& 1 & He also twice twice  in the Summer Olympics, starting in 1996. \\
\midrule
\midrule
& \textbf{Step} & \multicolumn{1}{c}{\textbf{Example 2}} \\
\midrule
\textbf{Source} & - & Whedon served as an executive producer, along with Tim Minear. \\
\textbf{Target} & - & Whedon was the executive producer, along with Tim Minear. \\
\midrule
\multirow{2}{*}{\textbf{Baseline}} & 2 & Whedon was \textit{\textbf{an}} an executive producer with Tim Minear. \\
& 1 & He was \textit{\textbf{an}} an executive producer  with Tim Minear. \\
\midrule
\multirow{2}{*}{\textbf{\framework}} & 2 & Whedon was an executive producer with Tim Minear. \\
& 1 & He was an executive producer with Tim Minear. \\
\bottomrule
\end{tabular}
\caption{Examples of generation throughout the denoising process, with $\text{NFE}=2$.}
\label{tab:example}
\end{table*}

\begin{table*}[ht]
\centering
\small
\renewcommand{\arraystretch}{1.2}

\begin{tabular}{l l}
\toprule
\midrule
& \multicolumn{1}{c}{\textbf{Example 1}} \\
\midrule
\textbf{Source} & How do you think Federer, Nadal, Djokovic, Murray, Wawrinka, Delpo \& Cilic rank in terms of potential \\
& when it comes to greatest spot in the tennis?  \\
\textbf{Target} & How would you rank Federer, Nadal, Djokovic, Murray, Wawrinka, Delpo and Cilic in terms of greatness \\
& achieved and potential spot in the history of sport? \\
\textbf{Baseline} & How do you think about \textit{\textbf{the Federer, rank in a good of Mury Wawrinka}}, Delgreatest rank and Cilic rank in \\
& terms of for tennis? \\
\textbf{\framework} & What is the review of vic, MurAur, Murray, Dray, Wawrinka, DelPO \& Cilic rank in tennis? \\
\midrule
\midrule
& \multicolumn{1}{c}{\textbf{Example 2}} \\
\midrule
\textbf{Source} & How will the ban on 500 and 1000 rupee notes bring out the black money of the big shots who have lots \\
& of it in the Swiss bank in a different currency?  \\
\textbf{Target} & How will the decision to illegalize the 500 and 1000 Rs notes help to get rid of black money in the Swiss \\
& bank or maybe in other foreign banks and currencies? \\
\textbf{Baseline} & What will be the effect of banning 500 and 1000 notes on people \textit{\textbf{having money}} in Swiss banks? \\
\textbf{\framework} &  How is demonetizing the 500 and 1000 rupees notes affect Indian economy? How it affect the black money \\
& in Swiss bank accounts? \\
\bottomrule
\end{tabular}
\caption{Examples of generation on \textbf{Wiki-Auto} dataset. The resulting outputs demonstrate higher coherence in the long-term context.}
\label{tab:example_long}
\end{table*}

\begin{table*}[ht]
\centering
\small
\renewcommand{\arraystretch}{1.2}
\begin{tabularx}{\textwidth}{lX}
\hline
\hline
& \multicolumn{1}{c}{\textbf{Example 1}} \\
\hline
\textbf{Source} & This easy chicken curry recipe is made with just a handful of ingredients  \\
\textbf{\texttt{Plaid}} & This easy chicken curry recipe is made with just a handful of ingredients make provide add that helps easy add help give add that helps welcome \textcolor{red}{give help offer help for how you learn give help help help for act give help help help and perform give help help help for for how pay for help help help and even add help help help for give help that add give help up for the add provide help provide for help help help and add for how pay for help give help provide help and add give help help help and add help provide help to create help give help help help for for how doit add this give help help help that add help help give help help for and that give help add help give help give help for the run in the middle of the day add help help provide help help that gives simply give help help give help help provide help help provide help help for pay help help give help help pay give help give help give help help pay help the member on your family make help shop it} ... \\
\textbf{\texttt{Plaid} + \scheduler} & This easy chicken curry recipe is made with just a handful of ingredients at your disposal including a bow for glowing red ropes for strutter. It alone will be the trick since making anything else would normally be a lengthy method. Once that completes the dungeon is start laid out again towards the end where you learn the way to make naughty but once you find this little hidden dungeon point you will also need the basics of the tools to help you find your way to the recipe for cider seeds.$\hookleftarrow\hookleftarrow$So many would say these simple caves and dungeons are like paster glue in a Oddband environment. It is not just the recipe that is perhaps also the highlight in that the have side rooms for dungeon for completion in small spells or just locked in a spooky cave somewhere. The lack of questline support will prevent really good and fun faciettes. The formula the ingredients are a lot worth then anything else but for an hour of stats capable of requiring three seemed some how effective when you have the basics ... \\
\hline
\hline
& \multicolumn{1}{c}{\textbf{Example 2}} \\
\hline
\textbf{\textbf{Source}} & Generative models of text are very versatile: they can be used \\
\textbf{\texttt{Plaid}} & \textcolor{red}{Negative} models of text are very versatile: they can be used in the workplace and in the secondary market \textcolor{red}{provide availability provides published versions varies and competitive paid provided available for tender available as the selection of availability choices were very common on a high ... fair to read the initial sales framework very often ... provide selection for the except immediately in commonly based on how pricing control edited ask that everyone creates as a matter of how much published provide is actually apply control on that immediately on offered provide on exchange for them provide paid write line of commissioned provide access made applied to more creative offered works cover different parts of the market agree that paid sharing is much more what use sharing work on the ability to in understanding in support of a contribution of a meaningful piece on say you can provide featured provide paid paid assignment paid submissions management provided applied arrangements with mixed paid paid shared fit easily perform create commissioned work in private ... why is not offered applied availability on the subject or backing and reuse ... to ... more successful provide writing for working provide featured work submission creative exclusively created collection of icons is a symbol medium always learn how to make use of owns offering portfolio offering total free commissioned formats provided published licensing paid} \\
\textbf{\texttt{Plaid}+\scheduler} & Generative models of text are very versatile: they can be used to extract depth and status information cheaply. Others would argue that static models of text are a powerful aid for explersion; however, it was in the core caricatures that it didn’t offer much benefit because in the instance they were not comprehend, the translation to say showed a very high degree of complexity. If you are looking for help overcoming the CPP constraints, you will want to consider means with which the manual models have been effectively replaced. How much structure really can be leveraged here is left as an observation regarding numeric computation and the need to adapt.$\hookleftarrow\hookleftarrow$Even if the topic had not made its true contribution in the Enchanted Mongoose core system, the switch to primary structures would be significant. When I first saw the stream, I was told that the intent was to take the existing core system and replace it entirely with a trim and circular setup. That was described very well and perhaps would have been an option but as inconsistent with the extra-mainruction requirement as new core support for and directly related to static features. The details the stream provided suggest that they are not based on current text models, which probably means that the team is hoping for a design fix as you no longer get to the point of accurate input even in corramures. Primary layers, on the other hand, still provide a significant degree of input. You will have to navigate over simulated character emulation with replacement and recreate the basic static dynamic of the characters at a much condensed level. However, we are told that it will still be possible to use a simple filter in the secondary system to generate specific surface plans. \\
\hline
\end{tabularx}
\caption{Examples of 32-step long sequence generation on \textbf{Prefix Completion} with \texttt{Plaid-1B}. The output results indicate that \scheduler helps maintain the coherence and fluency of the paragraph, while ensuring consistency across sentences.}
\label{tab:example_prefix}
\end{table*}

\begin{table*}[ht]
\centering
\small
\renewcommand{\arraystretch}{1.2}
\begin{tabularx}{\textwidth}{lX}
\hline
\hline
& \multicolumn{1}{c}{\textbf{Example}} \\
\hline
\textbf{Source} & Lexically Constraints: Donald \\
\textbf{\texttt{Plaid}} & Sample 1: ... focused in comments on newly uncovered FBI collusion with President \textbf{Donald} Trump, Bill Nunes revelation that almost no trump prosecute matters, despite widespread evidence, making him the primary target of these very crimes ... \\
& Sample 2: ... It also includes some mud slinging with Obama FGA Chairman John Masters attacking \textbf{Donald Trump} for loose familiarity with the banking cartel. It’s no secret that Trump was beaten to the drapes by President Barack Obama on the actual exam. \\
& Sample 3: ... March in London Members of Parliament looking at paintings of Prime Minister \textbf{Donald} trump during EU festival day Getty Images Brexit protest ... \\
\textbf{\texttt{Plaid}+\scheduler} & Sample 1: ... It is also an equal, far more debasive sellout than the academia; some may have reasonably attributed the whole spread to \textbf{Donald} Nhutbout not clicking into the comments I have written this time; it seems to be unfortunate because both the coverage and commentary involved that continued focus could just seem to call out bias and denial ... \\
& Sample 2: ...  But at least on campuses, the so called feminist war on radical conservatives may actually be having a negative effect on Loomer attendance. When it comes to that \textbf{Donald} Trump presidency, we might actually find men who don’t have the viciousness, sorry love, of the feminist left ... \\
& Sample 3: ... The overproduction chells fields and destroys towns, creates great energy exhaustion and destabilization and allot in the free residents and the general approach to produce any other good or industry fails to expand operations and flows.$\hookleftarrow\hookleftarrow$\textbf{Donald} Stone: Within one of the smaller areas, on the street of \textbf{Donald} Stone’s home, lies the Exit Man, now the best live erifot and Pisa \_example courtesy of writer B. JonKing ... \\
\hline
\end{tabularx}
\caption{Examples of 32-step long sequence generation on \textbf{Lexically Constrained Generation} with \texttt{Plaid-1B}. The \texttt{Plaid-1B} samples lack diversity while ours cover more word combinations.}
\label{tab:example_lexical}
\end{table*}

\begin{table*}[ht]
\centering
\small
\renewcommand{\arraystretch}{1.2}
\begin{tabularx}{\textwidth}{lX}
\hline
\hline
& \multicolumn{1}{c}{\textbf{Example}} \\
\hline
\textbf{Source} & Negation: Donald but not Trump  \\
\textbf{\texttt{Plaid}} & ... [profile] Frederic Blanchfield Britain’s earliest basalt monument in Little Choyside, Chepole, Dorset. Source: MORI.mauipomwork-8x17-1.jpg... [profile] Marco Bori Special Projects Marco Bastoni Participants in discussion with Russian crionic residents in Venice. Artist/21-ponoco-8x17-1.jpg... [profile] Marco Baba.alexnet-8x17-1.jpg... [profile] Giananna Boone Interior Photography Histature from Rome Client – Social Architecture – Vanguard.photo/46-photograph-164x72-r.jpg... [profile] \textbf{Donald} Buckingham Industrial Art Izomage ProKnakon Call to exercise Tradarpones in Real Materials-stone-piping-8x17-608px.jpg... [slide] artmagazine-20x26x24.jpg... [slide] ... \\
\textbf{\texttt{Plaid}+\scheduler} & ... It was dress for contempt. The dishonesty of this whole development was clear, an astonishing and obvious, necessary to humanity, but seen in the remark Il Quartilver made which showed an almost impervolent fit of irony, necessary to discern truth from record. ‘ Our exile and our exile have been secured. The ocean has risen. Hell, \textbf{Donald} haath been a fool, before he came to save me, Taracharvi, and finally ... not only in these troubled times it was taking a giant it-own for their charisma, but sol gallur phir bariet die Pope \textbf{Donald}. If they couldn’t say ‘wrong’, where did they get it? The slanders at Timon knew to just make radios, they could have solved ... \\
\hline
\end{tabularx}
\caption{Examples of 32-step long sequence generation on \textbf{Negation} with \texttt{Plaid-1B}. The results demonstrate that the \texttt{Plaid-1B} sample is likely memorized from the dataset when against an unusual pattern, while our method produces a normal paragraph that satisfies the constraint.}
\label{tab:example_negative}
\end{table*}

\end{document}